\newtheorem{theorem}{Theorem}
\newtheorem{lemma}[theorem]{Lemma}
\newtheorem{definition}[theorem]{Definition}
\newcommand{\newmarkedtheorem}[1]{%
  \newenvironment{#1}
    {\pushQED{\qed}\csname inner@#1\endcsname%
     \renewcommand{\qedsymbol}{$\lhd$}\normalfont}
    {\popQED\csname endinner@#1\endcsname}%
  \newtheorem{inner@#1}%
}
\title{ 
    Stratified Negation in Limit Datalog Programs
}
\author{
    Mark Kaminski,~~Bernardo Cuenca Grau,~~Egor V. Kostylev,~~Boris Motik\and Ian Horrocks\\
    Department of Computer Science, University of Oxford, UK\\
    \{mark.kaminski,~bernardo.cuenca.grau,~egor.kostylev,~boris.motik,~ian.horrocks\}@cs.ox.ac.uk
}
\begin{document}

\maketitle
  
\begin{abstract}
  There has recently been an increasing interest in declarative data analysis,
  where analytic tasks are specified using a logical language, and their
  implementation and optimisation are delegated to a general-purpose query
  engine.  Existing declarative languages for data analysis can be formalised
  as variants of logic programming equipped with arithmetic function symbols
  and/or aggregation, and are typically undecidable. In prior work,
  the language of \emph{limit programs} was proposed, which is sufficiently
  powerful to capture many analysis tasks and has decidable entailment
  problem. Rules in this language, however, do not allow for negation. In this
  paper, we study an extension of limit programs with stratified
  negation-as-failure. We show that the additional expressive power makes
  reasoning computationally more demanding, and provide tight data complexity
  bounds. We also identify a fragment with tractable data complexity and
  sufficient expressivity to capture many relevant tasks.
\end{abstract}

\section{Introduction}\label{sec:introduction}

Data analysis tasks are becoming increasingly important in information systems.
Although these tasks are currently implemented using code written in standard programming languages, in recent years there has been
a significant shift towards declarative solutions where the definition of the task is clearly separated from its
implementation \cite{DBLP:conf/eurosys/AlvaroCCEHS10,DBLP:journals/pvldb/Markl14,DBLP:journals/tkde/SeoGL15,DBLP:journals/pvldb/WangBH15,DBLP:conf/sigmod/ShkapskyYICCZ16,DBLP:conf/ijcai/KaminskiGKMH17}.

Languages for declarative data analysis are typically rule-based, and they have already been implemented in 
reasoning engines such as BOOM
\cite{DBLP:conf/eurosys/AlvaroCCEHS10}, DeALS
\cite{DBLP:conf/sigmod/ShkapskyYICCZ16}, Myria
\cite{DBLP:journals/pvldb/WangBH15}, SociaLite
\cite{DBLP:journals/tkde/SeoGL15}, Overlog
\cite{DBLP:journals/cacm/LooCGGHMRRS09}, Dyna
\cite{DBLP:conf/datalog/EisnerF10}, and Yedalog
\cite{DBLP:conf/snapl/ChinDEHMOOP15}.

Formally, such declarative languages can be seen as variants of 
logic programming equipped with means for capturing quantitative aspects of the data, such as 
arithmetic function symbols and aggregates. It is, however, well-known since the 
'90s that the combination of recursion with numeric computations in rules easily leads to 
semantic difficulties \cite{DBLP:conf/vldb/MumickPR90,DBLP:conf/slp/KempS91,DBLP:journals/jlp/BeeriNST91,DBLP:conf/pods/Gelder92,DBLP:journals/tcs/ConsensM93,DBLP:journals/jcss/GangulyGZ95,RossS97,DBLP:journals/vldb/MazuranSZ13},
and/or undecidability of reasoning   \cite{DBLP:journals/csur/DantsinEGV01,DBLP:conf/ijcai/KaminskiGKMH17}.
In particular, undecidability
carries over to the languages underpinning the aforementioned reasoning engines for 
data analysis.

\citeA{DBLP:conf/ijcai/KaminskiGKMH17} have recently proposed the language of
\emph{limit Datalog programs}---a decidable variant of negation-free Datalog equipped with
arithmetic functions over the integers that is expressive enough to
capture many data analysis tasks.
The key feature of limit programs is that 
all intensional predicates with a numeric argument are
\emph{limit predicates}, the extension of which represents 
minimal ($\tmin$) or maximal
($\tmax$) bounds of numeric values.
For instance, if we encode a weighted directed graph as facts over a ternary 
$edge$ predicate and a unary $node$ predicate in the obvious way, then 
the following rules encode the all-pairs shortest path problem, where
the ternary $\tmin$ limit predicate $d$ is used to encode the distance  from any node to any other node
in the graph as the length of a shortest path between them.
\begin{align}
  \mathit{node}(x)&\to\mathit{d}(x,x,0) \label{eq:motivation-1}\\
    \mathit{d}(x,y,m)\land\mathit{edge}(y,z,n)&\to\mathit{d}(x,z,m+n)  \label{eq:motivation-2}
\end{align}
The semantics of $\tmin$ predicates is defined such that a fact $d(u,v,k)$ is entailed from these rules and a dataset if and only if 
the distance from $u$ to $v$ is at most $k$; as a result,
all facts  $d(u,v,k')$ with $k' \geq k$ are also entailed. This is in contrast to
standard first order predicates, where there is no semantic relationship between
$d(u,v,k)$ and $d(u,v,k')$.
The intended semantics of limit predicates can be axiomatised 
using rules over standard predicates; in particular, our example limit program 
is equivalent to a standard logic program consisting of rules \eqref{eq:motivation-1}, \eqref{eq:motivation-2},
and the following rule \eqref{eq:axiomatisation-motivation}, where $d$ is now treated as a regular first-order predicate:
\begin{align}
d(x,y,k) \wedge (k \leq k') & \to d(x,y,k'). \label{eq:axiomatisation-motivation}
\end{align}
\citeA{DBLP:conf/ijcai/KaminskiGKMH17} showed that, under certain restrictions on the use of multiplication,
reasoning (i.e., fact entailment) over limit programs is decidable and $\conp$-complete in data
 complexity; then, they proposed a practical fragment with tractable
data complexity.

Limit Datalog programs as defined in prior work are, however, positive and
hence do not allow for negation-as-failure in the body of rules. Non-monotonic
negation applied to limit atoms can be useful, not only to express a wider
range of data analysis tasks, but also to declaratively obtain solutions to
problems where the cost of such solutions is defined by a positive limit
program. For instance, our example limit program consisting of
rules~\eqref{eq:motivation-1} and~\eqref{eq:motivation-2} provides the length
of a shortest path between any two nodes, but does not provide access to any of
the paths themselves---an issue that we will be able to solve using
non-monotonic negation.

In this paper, we study the language of limit programs with stratified
negation-as-failure.  Our language extends both positive limit Datalog as
defined in prior work and plain (function-free) Datalog with stratified
negation.  We argue that our language provides useful additional expressivity,
but at the expense of increased complexity of reasoning; for programs with
restricted use of multiplication, complexity jumps from $\conp$-completeness in
the case of positive programs, to $\deltatwop$-completeness for programs with
stratified negation.  We also show that the tractable fragment of positive
limit programs defined in \cite{DBLP:conf/ijcai/KaminskiGKMH17} can be
seamlessly extended with stratified negation while preserving tractability of
reasoning; furthermore, the extended fragment is sufficiently expressive to
capture the relevant data analysis tasks.

The proofs of all our results are given in \ifdraft{the appendix}{%
  a technical report (see https://www.dropbox.com/s/ci2b1hltb1hlcr6/paper.pdf)%
}.


\section{Preliminaries}\label{sec:preliminaries}

In this section we recapitulate the syntax and semantics of Datalog programs with 
integer arithmetic and stratified negation (see e.g., 
\cite{DBLP:journals/csur/DantsinEGV01} for an excellent survey).

\myparagraph{Syntax} We assume a fixed vocabulary of countably
infinite, mutually disjoint sets of \emph{predicates} equipped with non-negative arities, \emph{objects},
\emph{object variables}, and \emph{numeric variables}. 
Each position ${1 \leq i \leq n}$ of an $n$-ary predicate
is of either \emph{object} or \emph{numeric sort}. An \emph{object term} is an object
or an object variable. A \emph{numeric term} is an integer, a numeric variable,
or of the form ${s_1 + s_2}$, ${s_1 - s_2}$, or ${s_1 \times s_2}$ where $s_1$
and $s_2$ are numeric terms and $+$, $-$, and $\times$ are the standard
\emph{arithmetic functions}. A \emph{constant} is an object or an integer. 
A \emph{standard atom} is
of the form ${B(t_1, \dots, t_n)}$, with $B$ an $n$-ary predicate  and
each $t_i$ a term matching the sort of the $i$-th position of $B$. 
A \emph{(standard) positive literal} is a standard atom, and a \emph{(standard) negative literal} is of the form $\naf\alpha$, for $\alpha$ a standard atom. 
A \emph{comparison atom} is of the form ${(s_1 < s_2)}$ or
${(s_1 \leq s_2)}$, with $<$ and $\leq$ the usual \emph{comparison
predicates} over the integers, and $s_1$ and $s_2$ numeric terms. We
write $(s_1\doteq s_2)$ as an abbreviation for $(s_1\le s_2)\land(s_2\le s_1)$.
A term, atom or literal is \emph{ground} if it has no variables. 

A \emph{rule} $r$ has the form
${\bigwedge\nolimits_i \mu_i \wedge \bigwedge\nolimits_j \beta_j \to \alpha}$,
where the \emph{body} $\bigwedge\nolimits_i \mu_i \wedge \bigwedge\nolimits_j \beta_j$ is a possibly empty conjunction of standard literals $\mu_{i}$ and comparison atoms $\beta_j$, and the \emph{head} $\alpha$ is a standard atom.
We 
assume without loss of generality that standard body literals are function-free; indeed, a conjunction with a functional term $s$ can be equivalently rewritten by replacing $s$ with a fresh variable $x$
and adding $(x \doteq s)$ to the conjunction.
A rule $r$ is \emph{safe} if each object variable in $r$ occurs in a positive
literal in the body of $r$.  A \emph{ground 
  instance of}  $r$ is obtained from
$r$ by substituting 
each variable by a constant of the right sort.

A \emph{fact} is a rule with empty body and a function-free standard atom in the head that has no variables in object positions and no repeated variables in numeric positions. Intuitively, a variable in a fact says that the fact holds for every integer in the position. As a convention, we will omit $\to$ and use symbol $\allZ$ instead of variables when writing facts.
A \emph{dataset} $\Dat$ is a finite
set of facts. Dataset $\Dat$ is \emph{ordered} if
\begin{inparaenum}[\it (i)]
\item it contains facts $\mathit{first}(a_1)$, $\mathit{next}(a_1,a_2)$, $\dots$, $\mathit{next}(a_{n-1},a_n)$, $\mathit{last}(a_n)$ for some
  repetition-free enumeration $a_1,\dots,a_n$ of all objects in $\Dat$; and
\item it contains no other facts over predicates
  $\mathit{first}$, $\mathit{next}$, and $\mathit{last}$. 
\end{inparaenum}
A \emph{ program} is a finite set of safe rules; 
without loss of generality we assume that distinct rules do not share variables.  A predicate $B$ is
\emph{intensional} (\emph{IDB}) in a program $\Prog$ if $B$ occurs in $\Prog$
in the head of a rule that is not a fact; otherwise, $B$ is \emph{extensional}
(\emph{EDB}) in $\Prog$. 
Program $\Prog$ is \emph{positive} if it has no negative literals, and it is
\emph{semi-positive} if negation occurs only  in front of EDB atoms. A
\emph{stratification} of $\Prog$ is a function $\lambda$ mapping each predicate
to a positive integer such that, for each rule with the head over a predicate $A$ and each
standard body literal $\mu$ over $B$, we have $\lambda(B)\le\lambda(A)$ if $\mu$ is positive, and
$\lambda(B)<\lambda(A)$ if $\mu$ is negative.
Program $\Prog$ is \emph{stratified} if it admits a stratification.
Given a stratification $\lambda$, we write $\Prog[i]$ for the \emph{$i$-th stratum of $\Prog$ over $\lambda$}---that is, the set of all rules in $\Prog$ whose head predicates $A$ satisfy
$\lambda(A)=i$. Note that each stratum is a semi-positive program. 

\myparagraph{Semantics} A \emph{(Herbrand) interpretation} $I$ is a possibly
infinite set of ground facts (i.e., facts without $\allZ$). Interpretation $I$ \emph{satisfies} a
ground atom $\alpha$, written ${I \models \alpha}$, if either
\begin{inparaenum}[\it (i)]
\item $\alpha$ is a standard atom such that evaluation of the arithmetic functions in
  $\alpha$ under the usual semantics over integers produces a fact in $I$; or
\item $\alpha$ is a comparison atom that evaluates to $\true$ under the
 usual semantics.
\end{inparaenum}
Interpretation $I$ \emph{satisfies} a ground negative literal $\naf\alpha$, written
$I\models\naf\alpha$, if $I\not\models\alpha$.  The notion of satisfaction is
extended to conjunctions of ground literals, rules, and programs as
in first-order logic, with all variables in rules implicitly universally quantified.  If
${I}$ satisfies a program $\Prog$, then $I$ is a \emph{model} of $\Prog$.
For $I$ a Herbrand interpretation and $\SProg$ a (possibly infinite) semi-positive set of rules, let $\IFPStep{\SProg}{I}$ be the set of facts $\alpha$ such
that $\varphi \to \alpha$ is a ground instance of a rule in $\SProg$ and $I \models \varphi$. Given a program $\Prog$ and a stratification $\lambda$ of
$\Prog$, for each $i,j \geq 0$ we define interpretation 
$\MatSeq i j$ by induction on $i$ and~$j$:
\begin{align*}
  \MatSeq{0}{j}&=\MatSeq{i}{0}=\emptyset;
  & \MatSeq{i+1}{j+1}&=\IFPStep{\Prog[i+1]\cup\MatSeq{i}{\infty}}{\MatSeq{i+1}{j}};
  & \MatSeq{i}{\infty}&=\bigcup_{j\ge 0}\MatSeq{i}{j}.
\end{align*}
The \emph{materialisation} $\Mat\Prog$ of $\Prog$ is the interpretation
$\MatSeq{k}{\infty}$, for $k$ the greatest number such that
$\Prog[k]\ne\emptyset$. The materialisation of a program
does not depend on the chosen stratification.
%
A stratified program $\Prog$ \emph{entails} a fact $\alpha$, written
$\Prog\models\alpha$, if $\alpha'\in\Mat\Prog$ for every ground instance $\alpha'$ of $\alpha$.
For positive programs, 
this definition coincides with the usual first-order notion of entailment: for
$\Prog$ positive and $\alpha$ a fact, $\Prog\models\alpha$ if and
only if $I\models\alpha$ holds for all $I\models\Prog$.

\myparagraph{Reasoning} We study the computational properties of
checking whether ${\Prog\cup\Dat\models \alpha}$, for $\Prog$ a program, $\Dat$ a
dataset, and $\alpha$ a fact. We are interested in \emph{data complexity}, which
assumes that only $\Dat$ and $\alpha$ form the input while $\Prog$ is
fixed. Unless otherwise stated, all numbers in the input are coded in binary,
and the \emph{size} $\ssize{\Prog}$ of $\Prog$ is the size of its
representation.  Checking ${\Prog \cup \Dat \models \alpha}$ is undecidable even if the
only arithmetic function in $\Prog$ is $+$ \cite{DBLP:journals/csur/DantsinEGV01} and
predicates have at most one numeric position
\cite{DBLP:conf/ijcai/KaminskiGKMH17}.

We use standard definitions of the basic complexity classes such as
$\pt$, $\np$, $\conp{}$, and $\fph{}$. Given a complexity class $C$, $\pt^{C}$ is
the class of decision problems solvable in polynomial time by deterministic Turing machines
with an oracle for a problem in $C$; functional class $\fph{C}$ is defined similarly. Finally,
$\deltatwop$ is a synonym for $\pt^{\np}$.



\section{Stratified Limit Programs}\label{SEC:LIMIT-PROGRAMS}

We introduce 
\emph{stratified limit programs} as a language that
can be seen as either a semantic or a
syntactic restriction of Datalog
with integer arithmetic and stratified negation.
Our language is also an extension of that in~\cite{DBLP:conf/ijcai/KaminskiGKMH17}
with stratified negation. 

\begin{definition}
  A \emph{stratified limit program} is a pair $(\Prog, \tau)$ where
  \begin{compactitem}
  \item[--] $\Prog$ is a stratified program where each
    predicate either has no numeric position, in which case it is an
    \emph{object predicate}, or only its last position is numeric, in which
    case it is a \emph{numeric predicate}, and
  \item[--] $\tau$ is a partial function from numeric predicates to
    $\{\tmin, \tmax\}$ that is total on the IDB predicates in $\Prog$ and on
    predicates occurring in non-ground facts.
  \end{compactitem}
  A numeric predicate $A$ is a $\tmin$ (or $\tmax$) \emph{limit predicate} if
  $\tau(A) = \tmin$ (or $\tau(A) = \tmax$, respectively).  Numeric predicates
  that are not limit predicates are \emph{ordinary}. An atom, fact or literal
  is \emph{numeric}, \emph{limit}, etc.~if so is the used predicate.
\end{definition} 

All notions defined on ordinary Datalog programs $\Prog$ (such as EDB and IDB
predicates, stratification, etc.) transfer to limit programs $(\Prog, \tau)$ by
applying them to $\Prog$.  We often abuse notation and write $\Prog$ instead of
$(\Prog, \tau)$ when $\tau$ is clear from the context or immaterial. Whenever
we consider a union of two limit programs, we silently assume that they
coincide on~$\tau$.  Finally, we denote $\le$ (or $\ge$) by $\preceq_A$ if $A$
is a $\tmax$ (or, respectively, $\tmin$) limit predicate.

Intuitively, a limit fact $B(\vec a,k)$ says that the value of $B$ for a
tuple of objects $\vec{a}$ is $k$ \emph{or more}, if $B$ is $\tmax$, or $k$
\emph{or less}, if $B$ is $\tmin$. 
For example, a $\tmin$ limit fact $d(u,v,k)$ in our
all-pairs shortest path example says that node $v$ is reachable from node $u$ 
via a path with cost $k$ or less. 
The intended semantics of limit predicates can be
axiomatised using standard rules as given next.


\begin{definition}
  An interpretation $I$ \emph{satisfies} a limit program $(\Prog,\tau)$ if it
  satisfies the 
  program $\Prog \cup \ax{\Prog}$,
  where $\ax{\Prog}$ contains the following rule for each limit predicate $A$
  in~$\Prog$:
 \begin{align*}
                                      A(\vec{x},m) \wedge (n \preceq_A m) & \to A(\vec{x},n).
 \end{align*}
The \emph{materialisation} ${\Mat{\Prog,\tau}}$ of $(\Prog,\tau)$ is ${\Mat{\Prog\cup\ax{\Prog}}}$; and $(\Prog,\tau)$ \emph{entails} $\alpha$, written
 $(\Prog,\tau)\models\alpha$, if $\alpha\in\Mat{\Prog,\tau}$. 
\end{definition}

We next demonstrate the use of stratified negation on examples. 
One of the main uses of negation of a limit atom is to `access'
the limit value (e.g., the length of
a shortest path) attained by the atom in the materialisation of previous strata, and then exploit
such values in further computations. 
To facilitate such use of negation in examples, we introduce a
new operator as syntactic sugar in the language.

\begin{definition}
The  \emph{least upper bound expression} 
$\lub{A(\vec s,n)}$ of a $\tmax$ (or $\tmin$) limit atom $A(\vec s,n)$ is the conjunction $A(\vec s,n)\land\naf A(\vec s,m)\land (m\doteq n+t)$ where $t = 1$ (or $t = -1$, respectively) and $m$ is a fresh variable.
\end{definition}

Clearly, $I\models\lub{A(\vec a,k)}$ for $I$ an interpretation and
$A(\vec a,k)$ a ground atom if $k$ is the limit integer such that
$I\models A(\vec a,k)$.

\begin{example}\label{ex:shortest-path}
  An input of the single-pair shortest path problem can be encoded in the
  obvious way as a dataset $\Dat_{\mathit{sp}}$ using a ternary ordinary
  numeric predicate $\mathit{edge}$ to represent the graph's weighted edges,
  and unary facts $\mathit{source}(u)$ and $\mathit{target}(v)$ to identify the
  source and target nodes $u$ and $v$, respectively.  The stratified limit
  program $\Prog_{\mathit{sp}}$ given next computes, together with
  $\Dat_{\mathit{sp}}$ (where all edge weights are positive), a DAG over a binary
  object predicate $\mathit{sp\text{-}edge}$ such that every maximal path in
  the DAG is a shortest path from $u$ to~$v$.
  \begin{align}
    \mathit{source}(x)&\to\mathit{ds}(x,0)\label{eq:sp2:source}\\
    \mathit{ds}(x,m) \land \mathit{edge}(x,y,n) & \to \mathit{ds}(y,m + n)\label{eq:sp2:ind}\\
    \begin{array}{@{}r@{}}
      \lub{\mathit{ds}(x,m_1)}\land\lub{\mathit{ds}(y,m_2)}\\
      \mathit{edge}(x,y,n)\land\mathit{target}(y)\\
      (m_1+n\doteq m_2)
    \end{array}&\begin{array}{@{}l@{}}{}\land{}\\{}\land{}\\{}\to\mathit{sp\text{-}edge}(x,y)\end{array}\label{eq:sp2:path1}\\
    \begin{array}{@{}r@{}}
      \lub{\mathit{ds}(x,m_1)}\land\lub{\mathit{ds}(y,m_2)}\\
     \mathit{edge}(x,y,n)\land\mathit{sp\text{-}edge}(y,z)\\
      (m_1+n\doteq m_2)
    \end{array}&\begin{array}{@{}l@{}}{}\land{}\\{}\land{}\\{}\to\mathit{sp\text{-}edge}(x,y)\end{array}\label{eq:sp2:path2}
  \end{align}
  The first stratum consists of rules \eqref{eq:sp2:source} and \eqref{eq:sp2:ind}, and computes the length of
  a shortest path from $u$ to all other nodes using the $\tmin$ predicate $\mathit{ds}$; in particular,
  $\Prog_{\mathit{sp}}\cup\Dat_{\mathit{sp}}\models\lub{\mathit{ds}(v,k)}$
  if and only if $k$ is the length of a shortest path from $u$ to $v$. Then, in a second stratum, the program computes 
  the $\mathit{sp\text{-}edge}$ predicate such that
  $\Prog_{\mathit{sp}}\cup\Dat_{\mathit{sp}}\models\mathit{sp\text{-}edge}(a,b)$ if and only if the edge
  $(a,b)$ is part of a shortest path from $u$ to~$v$.
\end{example}

\begin{example}\label{ex:closeness-centrality}
  The closeness centrality of a node in a strongly connected weighted directed graph $G$ is a
  measure of how central the node is in the graph~\cite{Sabidussi66}; variants
  of this measure are useful, for instance, for the analysis of market
  potential.  Most commonly, closeness centrality of a node $u$ is defined as
  $1/\sum_{v\textup{ node in }G}d(u,v)$, where $d(u,v)$ is the length of a shortest path from $u$ to $v$; 
  the sum in the denominator is often called the \emph{farness centrality} of $v$. We
  next give a limit program computing a
  node of maximal closeness centrality in a given directed graph. We encode a graph as an ordered dataset
  $\Dat_{\mathit{cc}}$ using, as before, a unary object predicate $\mathit{node}$  and a ternary
  ordinary numeric predicate $\mathit{edge}$.  Program
  $\Prog_{\mathit{cc}}$  consists of rules
  \eqref{eq:cc:dist0}--\eqref{eq:cc:centre}, where $\mathit{d}$,
  $\mathit{fness}'$ and $\mathit{fness}$ are $\tmin$ predicates, and
  $\mathit{centre}'$ and $\mathit{centre}$ are object predicates.
  \begin{align}
    \mathit{node}(x)&\to\mathit{d}(x,x,0)\label{eq:cc:dist0}\\
    \mathit{d}(x,y,m)\land\mathit{edge}(y,z,n)&\to\mathit{d}(x,z,m+n)\label{eq:cc:dist-edge}\\
      \mathit{first}(y) \land \mathit{d}(x,y,n) & \to\mathit{fness}'(x,y,n)\label{eq:cc:fness-first}\\
    \begin{array}{@{}r@{}}
      \mathit{next}(y,z)\\
      \mathit{fness}'(x,y,m)\land\mathit{d}(x,z,n)
    \end{array}
    &\begin{array}{@{}l@{}}{}\land{}\\{}\to\mathit{fness}'(x,z,m\,{+}\,n)\end{array}\label{eq:cc:fness-next}\\
    \mathit{fness}'(x,y,n)\land\mathit{last}(y)&\to\mathit{fness}(x,n)\label{eq:cc:fness}\\
    \mathit{first}(x)&\to\mathit{centre}'(x,x)\label{eq:cc:centre-first}\\
  \begin{array}{@{}r@{}}
    \mathit{next}(x,y)\land\mathit{centre}'(x,z)\\
    \lub{\mathit{fness}(z,n)}\land\lub{\mathit{fness}(y,m)}\\
    (m<n)
    \end{array}
  &\begin{array}{@{}l@{}}{}\land{}\\{}\land{}\\{}\to\mathit{centre}'(y,y)\end{array}\label{eq:cc:centre-next1-tc}\\
  \begin{array}{@{}r@{}}
    \mathit{next}(x,y)\land\mathit{centre}'(x,z)\\
    \lub{\mathit{fness}(z,n)}\land\lub{\mathit{fness}(y,m)}\\
    (n\le m)
  \end{array}
  &\begin{array}{@{}l@{}}{}\land{}\\{}\land{}\\{}\to\mathit{centre}'(y,z)\end{array}\label{eq:cc:centre-next2-tc} \\
    \mathit{centre}'(x,z)\land\mathit{last}(x)&\to\mathit{centre}(z)\label{eq:cc:centre}
  \end{align}
  The first stratum consists of rules  \eqref{eq:cc:dist0}--\eqref{eq:cc:fness}.
 Rules \eqref{eq:cc:dist0} and \eqref{eq:cc:dist-edge} compute the distance (length of a shortest path)
 between any two nodes. Rules \eqref{eq:cc:fness-first}--\eqref{eq:cc:fness} then compute the 
 farness centrality of each node based on the aforementioned distances; for this, the program 
 exploits the order predicates to iterate over the nodes in the graph while recording the best value obtained so far in the iteration 
 using an auxiliary predicate $\mathit{fness}'$. In the second stratum (rules \eqref{eq:cc:centre-first}--\eqref{eq:cc:centre}), 
 the program uses negation to compute the node of minimum farness  centrality (and hence of maximum closeness centrality), which is 
 recorded using the $\mathit{centre}$ predicate; the order is again exploited to iterate over nodes, and an auxiliary predicate $\mathit{centre}'$ is used to record the
 current node of the iteration and the node with the best centrality encountered so far. 
 \end{example}

%


\section{Stratified Limit-Linear Programs}\label{SEC:DECIDABILITY}

By results in~\cite{DBLP:conf/ijcai/KaminskiGKMH17}, checking fact entailment
is undecidable even for positive limit programs. Essentially, this follows from the fact that checking rule applicability over a set of facts requires solving arbitrary non-linear inequalities over integers---that is, solving the 10th Hilbert problem, which is undecidable.
To regain decidability, they proposed a restriction on positive limit
programs, called \emph{limit-linearity}, which ensures that 
every program satisfying the restriction can be
transformed using a grounding technique so that
all numeric terms in the resulting program are linear.
In particular, this implies that
rule applicability can be determined by solving a system of linear inequalities, which is feasible in $\np$. 
As a result, fact entailment for positive limit-linear programs is $\conp$-complete in data complexity.

We next extend the notion of
limit-linearity to programs with stratified negation, and define
semi-grounding as a way to simplify a limit-linear program by
replacing certain types of variables with constants. We then prove that fact entailment is $\deltatwop$-complete in data complexity for such programs.
All  programs in our previous examples are limit-linear
as per the definition given next.

\begin{definition}\label{def:limit-linear}
  A numeric variable $n$ is \emph{guarded} in a rule $r$ of a stratified limit program if
  \begin{compactitem}
  \item[--] either $n$ occurs in a positive ordinary literal in $r$;
  \item[--] or the body of $r$ contains the literals
    $$
    A(\vec s,n_1), \;\; \naf A(\vec s,n_2), \;\; (n_2\doteq n_1+t),
    $$
    where
    $A$ is a $\tmax$ (or $\tmin$) predicate, $t=1$ (or $t=-1$, respectively), and
    $n\in\set{n_1,n_2}$.
  \end{compactitem}

  Rule $r$ is \emph{limit-linear} if each numeric term in $r$ is of the form
  ${s_0 + \sum_{i=1}^n s_i \times m_i}$, where each $m_i$ is a distinct numeric
  variable not occurring in $r$ in a (positive or negative) ordinary numeric
  literal, term $s_0$ uses only variables occurring in a positive ordinary
  literal in $r$, and terms $s_i$ with $i \ge 1$ use only variables that are
  guarded in $r$ and do not use $+$.  A \emph{limit-linear} program contains
  only limit-linear rules.

  A rule $r$ is \emph{semi-ground} if all variables in $r$ are numeric and
  occur only in limit and comparison atoms.
   The \emph{semi-grounding} of a program\/ $\Prog$ is obtained by replacing,
   in every rule $r$ in $\Prog$, each object variable and each numeric variable
   occurring in an ordinary numeric atom in $r$ with a constant in~$\Prog$ in
   all possible ways.
\end{definition}
It is easily seen that the semi-grounding of a limit-linear program $\Prog$ entails
the same facts as $\Prog$ for every dataset. Furthermore, as in prior work, 
Definition \ref{def:limit-linear}
ensures that the semi-grounding of a positive limit-linear program contains
only linear numeric terms; finally, for programs with stratified negation, it
ensures that negation can be eliminated while preserving limit-linearity when
the program is materialised stratum-by-stratum, as we will discuss in detail
later on.

Decidability of fact entailment for positive limit-linear programs is
established by first semi-grounding the program and then reducing fact
entailment over the resulting program to the validity problem of Presburger
formulas \cite{DBLP:conf/ijcai/KaminskiGKMH17}---that is, first-order formulas
interpreted over the integers and composed using only variables, constants $0$
and $1$, functions $+$ and $-$, and the comparisons.

The extension of such a reduction to stratified limit programs, however, is
complicated by the fact that in the presence of negation-as-failure, entailment
no longer coincides with classical first-order entailment. We thus adopt a
different approach, where we show decidability and establish data complexity
upper bounds
according to the following steps.

\emph{Step 1.} We extend the results in \cite{DBLP:conf/ijcai/KaminskiGKMH17} for
  positive programs by showing that, for every positive limit-linear program
  $\Prog$ and dataset $\Dat$, we can compute in $\fpnp{}$ a finite
  representation 
  of its (possibly infinite)
  materialisation $\Mat{\Prog \cup \Dat}$ (see
  Lemma~\ref{lem:pseudo-mat-bounded} and
  Corollary~\ref{cor:lim-lin-mat-fpnp}). This representation is called
  the \emph{pseudo-materialisation} of $\Prog\cup\Dat$.
  
\emph{Step 2.} We further extend the results in Step~1 to semi-positive limit-linear programs,
  where negation occurs only in front of EDB predicates. For this, we show that
  fact entailment for such programs can be reduced in polynomial time 
  in the size of the data to fact entailment over semi-ground positive
  limit-linear programs by exploiting the notion of a \emph{reduct} (see
  Definition~\ref{def:reduct} and Lemma~\ref{lem:reduct}). Thus,
  we can assume existence of an \fpnp{} oracle $O$ for computing the
  pseudo-materialisation of a semi-positive limit-linear program.

\emph{Step 3.} We provide an algorithm (see Algorithm~\ref{alg:gen-strat-fp}) that
  decides entailment of a fact $\alpha$ by a stratified limit-linear program
  $\Prog$ using oracle $O$ from Step~2. The algorithm maintains a
  pseudo-materialisation $J$, which is initially empty and is constructed
  bottom-up stratum by stratum.  In each step $i$, the algorithm updates the
  pseudo-materialisation by applying $O$ to the union of the
  pseudo-materialisation for stratum $i-1$ and 
  the rules in the $i$-th stratum. The final $J$, from which entailment of
  $\alpha$ is obtained, is computed using 
  a constant number of oracle calls in the size of the data,
  which yields a $\deltatwop$ data complexity upper bound (Proposition
  \ref{prop:algorithm-complexity} and Theorem \ref{thm:strat-lin-deltatwop}).



In what follows, we specify each of these steps.
We start by formally defining the notion of
a \emph{pseudo-materialisation} $\PMat\Prog$ of a stratified limit program $\Prog$, which compactly
represents the  materialisation $\Mat{\Prog}$. Intuitively,
$\Mat{\Prog}$ can be infinite because
it can contain, for any limit predicate $B$ and tuple of objects $\vec a$ of suitable arity, an infinite number of
facts of the form $B(\vec a,k)$. However, if 
the materialisation has facts  of this form, then
either there is a limit value $\ell$ such that $B(\vec a,k) \in \Mat{\Prog}$ for each $k\preceq_B\ell$
and  $B(\vec a,k') \notin \Mat{\Prog}$ for each $k' \succ_B \ell$, or
 $B(\vec a,k) \in \Mat{\Prog}$ for every integer $k$. As argued in prior work,
 it then suffices for the pseudo-materialisation to contain only a single fact $B(\vec a,\ell)$ in the former case, or
 $B(\vec a,\allZ)$ in the latter case.


\begin{definition}\label{def:pseudo-interpretation}
  A \emph{pseudo-interpretation} $J$ is a set of facts such that
  $\allZ$ occurs only in facts over limit predicates and ${k = k'}$ holds
  for all facts $B(\vec a,k)$ and $B(\vec a,k')$ in $J$ with limit $B$.

  The \emph{pseudo-materialisation} of a limit program $\Prog$,
  written $\PMat\Prog$, is the (unique) pseudo-interpretation such that
  \begin{compactenum}
  \item an object or ordinary numeric fact is contained in $\PMat\Prog$ if and
    only if it is contained in $\Mat\Prog$; and
  \item for each limit predicate $B$, object tuple  ${\vec a}$, and
   integer $\ell$,
    \begin{compactitem}
    \item[--] ${B(\vec a,\ell) \in\PMat\Prog}$ if and only if
      ${B(\vec a,\ell) \in\Mat\Prog}$ and ${B(\vec a,k) \not\in\Mat\Prog}$ for
      all ${k \succ_B \ell}$, and
    \item[--] ${B(\vec a,\allZ) \in\PMat\Prog}$ if and only if
      ${B(\vec a,k) \in\Mat\Prog}$ for all integers $k$.
    \end{compactitem}
  \end{compactenum}
\end{definition}

We now strengthen the results in~\cite{DBLP:conf/ijcai/KaminskiGKMH17} by
establishing a bound on the size of pseudo-materialisations of positive,
limit-linear programs.

\begin{restatable}{lemma}{pseudointerpretationsbounded}\label{lem:pseudo-mat-bounded}
  Let\/ $\Prog$ be a semi-ground, positive, limit-linear program, and let
  $\Dat$ be a limit dataset. Then
  $|\PMat{\Prog\cup\Dat}| \leq |\Prog \cup \Dat|$ and the magnitude of each
  integer in $\PMat{\Prog\cup\Dat}$ is bounded polynomially in the
  largest magnitude of an integer in ${\Prog \cup \Dat}$, exponentially in $|\Prog|$,
  and double-exponentially in ${\max_{r \in \Prog} \ssizeu{r}}$, where
  $\ssizeu{r}$ stands for the size of the representation of $r$ assuming that
  all numbers take unit space.
\end{restatable}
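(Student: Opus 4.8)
The plan is to analyze the fixpoint computation $\MatSeq{i}{j}$ defining $\Mat{\Prog\cup\Dat}$ (with $\Prog\cup\Dat$ semi-ground and positive, so there is a single stratum and the materialisation is the least fixpoint of $\IFPStep{\Prog\cup\Dat\cup\ax{\Prog}}{\cdot}$), and to track, for each limit predicate $B$ and object tuple $\vec a$ on which a fact $B(\vec a,k)$ is derivable, how the limit value $\ell$ (or the verdict $\allZ$) gets produced. First I would establish the cardinality bound: since $\Prog\cup\Dat$ is semi-ground, the only variables left in rules are numeric variables occurring solely in limit and comparison atoms, so the set of ``addressable'' pairs $(B,\vec a)$ with $B$ a limit or ordinary numeric or object predicate is bounded by the number of ground atoms that can be formed using the constants syntactically present in $\Prog\cup\Dat$, and by the definition of a pseudo-interpretation each such pair contributes at most one fact to $\PMat{\Prog\cup\Dat}$. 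Counting these atoms against the representation size yields $|\PMat{\Prog\cup\Dat}|\le|\Prog\cup\Dat|$; I would need to be slightly careful that object and ordinary numeric facts in $\Mat{\Prog\cup\Dat}$ are also ground atoms over these constants (they are, because a semi-ground rule produces in its head only terms built from such constants), so the same counting covers all three bullets of Definition~\ref{def:pseudo-interpretation}.

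The substantive part is the bound on the magnitudes of the integers appearing in $\PMat{\Prog\cup\Dat}$. Here I would invoke limit-linearity: in a semi-ground limit-linear rule every numeric term has the shape $s_0+\sum_i s_i\times m_i$ where $s_0$ is a concrete integer (it uses only variables in positive ordinary literals, of which there are none after semi-grounding), each $m_i$ is a limit-atom variable, and each coefficient $s_i$ is built without $+$ from guarded variables — which, again because there are no ordinary literals, means $s_i$ is a product of integers, hence a concrete integer. So every numeric term in a semi-ground rule body/head is an \emph{affine} integer combination $c_0+\sum_i c_i m_i$ of the limit variables $m_i$, with integer coefficients whose magnitudes are bounded by the term structure. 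Applicability of a ground instance therefore amounts to satisfiability of a system of linear inequalities (comparison atoms) and linear equalities (the $n\doteq s$ constraints coming from the normal form, plus the $\ax{\Prog}$ closure constraints $n\preceq_B m$), and the head value produced is a linear function of the chosen $m_i$. The next step is to bound how the limit value for a pair $(B,\vec a)$ can grow across the least-fixpoint iteration. Since the body of each rule refers only to finitely many limit facts, one iteration of $\IFPStep{}{\cdot}$ updates the tentative limit value of $B(\vec a,\cdot)$ to the $\preceq_B$-extreme of finitely many affine images of the current limit values of the body atoms; this is exactly a system whose stable solution (or detection that the value is unbounded, giving $\allZ$) can be read off as the optimum of a linear program over a polyhedron defined by the semi-grounding. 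I would argue that either the iteration stabilises with a finite vector of limit values, in which case that vector is a vertex solution of a linear system whose coefficients have magnitude at most polynomial in the largest input integer and whose dimension is at most $|\Prog\cup\Dat|$ — so by Cramer's rule / standard bounds on solutions of integer linear systems the entries are bounded by $(\text{max coeff})^{O(|\Prog\cup\Dat|)}$ — or the limit value is unbounded and the fact is recorded as $B(\vec a,\allZ)$.

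To get the precise dependence claimed — polynomial in the largest integer magnitude, exponential in $|\Prog|$, and \emph{double}-exponential in $\max_{r\in\Prog}\ssizeu{r}$ — I would unwind the two sources of coefficient blow-up separately. The coefficients $c_i$ in a term $c_0+\sum c_i m_i$ are products of the integers literally written in the rule $r$; a product of numbers each of magnitude at most $2^{\ssizeu{r}}$ taken over at most $\ssizeu{r}$ factors has magnitude at most $2^{\ssizeu{r}\cdot 2^{\ssizeu{r}}}$ — double-exponential in $\ssizeu{r}$, but note this quantity is still only \emph{polynomial} in the largest actual integer magnitude $\|\Prog\cup\Dat\|$ when that is the input parameter. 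The linear system whose vertex gives the stable limit-value vector then has dimension bounded by $|\Prog\cup\Dat|$, which is polynomial in $|\Dat|$ and bounded by $\exp(|\Prog|)$ after semi-grounding; applying the determinant bound for integer linear systems (a $d\times d$ system with entries of magnitude $M$ has solutions of magnitude at most $(dM)^{d}$) with $d\le|\Prog\cup\Dat|$ and $M$ the above coefficient bound produces exactly the triple dependence in the statement. The main obstacle I anticipate is the careful bookkeeping in this last step — in particular, arguing rigorously that the least-fixpoint iteration over an infinite interpretation really does stabilise to (or diverge monotonically from) a vertex of a \emph{fixed} polyhedron, despite the iteration a priori running for infinitely many steps $j$, and that ``diverges'' is correctly captured by the $\allZ$ case. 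I would handle this by a monotonicity argument: the sequence of tentative limit values for each pair $(B,\vec a)$ is $\preceq_B$-monotone, and limit-linearity forces each one-step update to be a piecewise-linear monotone map with rational (indeed integer) breakpoints bounded as above, so the sequence either reaches its fixpoint within a number of steps bounded by the magnitude bound itself, or increases without bound — there is no room for an infinite strictly-monotone bounded sequence over the integers.
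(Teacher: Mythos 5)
Your overall strategy---exploit limit-linearity to turn everything into integer linear systems and then apply Cramer-type bounds---is in the right spirit (the paper's own proof ultimately also rests on small-solution bounds for linear systems), but the execution has a genuine gap at exactly the point where the lemma is hard. First, the claim that the stable vector of limit values ``is a vertex solution of a linear system whose coefficients have magnitude at most polynomial in the largest input integer and whose dimension is at most $|\Prog\cup\Dat|$'' is asserted rather than proved. The one-step operator is, for each head, a maximum over rules of LP optima whose feasible regions depend on the current limit values; the set of pseudo-models is therefore a finite \emph{union} of polyhedra rather than a single one, and identifying the least fixpoint with a small-coefficient vertex of such a system---while simultaneously handling components that diverge to $\allZ$ and feed into bounded ones---is precisely the content of the lemma, not a routine step. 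Second, even granting that identification, a Cramer bound of the form $(dM)^{d}$ with $d\le|\Prog\cup\Dat|$ gives magnitudes of order $a^{O(|\Prog\cup\Dat|)}$, whose degree as a polynomial in the largest integer magnitude $a$ grows with $|\Dat|$. The lemma claims something stronger: the degree depends only on $|\Prog|$ and $\max_{r\in\Prog}\ssizeu{r}$, so that for a fixed program the bound is a \emph{fixed} polynomial in $a$ regardless of how many facts the dataset contains. Your argument as written does not deliver this.

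The paper closes both gaps differently. It first invokes the Presburger small-model machinery (Lemma~\ref{lem:presburger-encoding} and Theorem~\ref{th:pseudo}) to obtain an upper-bounding pseudo-model $J$ with $\PMat{\Prog\cup\Dat}\sqsubseteq J$ whose integers already satisfy the target bound $\ell$ and which agrees with the materialisation on which facts are $\allZ$; this settles divergence detection and caps all ``right-signed'' values. It then runs the fixpoint iteration underneath $J$ and observes that the auxiliary quantity $\ell_j$ tracking the wrong-signed values can strictly increase only when a rule head over a fresh predicate--tuple pair is populated for the first time---hence at most $|\Prog|$ times---and that each such increase multiplies the bound by a factor controlled by rule-local linear-system bounds (Chistikov--Haase). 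That is what yields exponential dependence on $|\Prog|$ but only polynomial dependence on $a$. Your closing monotonicity observation is correct as far as stabilisation goes, but it bounds the number of iteration steps, not the value reached; you still need the Presburger-derived cap or an equivalent device. (A minor further point: your cardinality argument counts all ground atoms over the constants of $\Prog\cup\Dat$, which can exceed $|\Prog\cup\Dat|$; the intended argument is that each semi-ground rule has a single fixed head atom and so contributes at most one fact to the pseudo-materialisation.)
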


By Lemma~\ref{lem:pseudo-mat-bounded}, the pseudo-materialisation 
of $\Prog \cup \Dat$
contains at most linearly many facts; furthermore,
the size of each such fact is 
bounded polynomially once $\Prog$ is considered fixed. Hence, 
the pseudo-materialisation of $\Prog$
can be computed in \fpnp{} in data
complexity, even if $\Prog$ is not semi-ground.

\begin{restatable}{corollary}{materialisationfpnp} \label{cor:lim-lin-mat-fpnp} %
  Let\/ $\Prog$ be a positive, limit-linear program. Then the
  function mapping each limit dataset $\Dat$ to $\PMat{\Prog\cup\Dat}$ is computable in \fpnp{}  in $\ssize{\Dat}$.
\end{restatable}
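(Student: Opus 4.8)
The plan is to reduce the problem to the semi-ground case, apply Lemma~\ref{lem:pseudo-mat-bounded} to conclude that the pseudo-materialisation is ``small'', and then extract it fact by fact using polynomially many calls to an $\np$ oracle. First, given a limit dataset $\Dat$, I would let $\Prog'$ be the semi-grounding of $\Prog$ with respect to all constants occurring in $\Prog\cup\Dat$, so that $\Prog'\cup\Dat$ is the semi-grounding of $\Prog\cup\Dat$. Since $\Prog$ is fixed, each of its rules has a bounded number of object variables and of numeric variables occurring in ordinary numeric atoms, and $\Prog\cup\Dat$ has only linearly many constants in $\ssize\Dat$; hence $\Prog'$ can be written down in time polynomial in $\ssize\Dat$, $|\Prog'|$ is polynomial in $\ssize\Dat$, the program $\Prog'$ is again semi-ground, positive and limit-linear, and $\max_{r\in\Prog'}\ssizeu r\le\max_{r\in\Prog}\ssizeu r$ is a constant (substituting a single constant for a variable does not change the unit-cost size of a rule). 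As observed after Definition~\ref{def:limit-linear}, semi-grounding preserves the entailed facts of a limit program over every dataset, so $\Mat{\Prog'\cup\Dat}=\Mat{\Prog\cup\Dat}$ and, by Definition~\ref{def:pseudo-interpretation}, $\PMat{\Prog'\cup\Dat}=\PMat{\Prog\cup\Dat}$; thus it suffices to compute the former.

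Then, applying Lemma~\ref{lem:pseudo-mat-bounded} to $\Prog'$ and $\Dat$ yields $|\PMat{\Prog'\cup\Dat}|\le|\Prog'\cup\Dat|$ together with a bound on the magnitude of each integer in $\PMat{\Prog'\cup\Dat}$ that is polynomial in the largest magnitude of an integer in $\Prog'\cup\Dat$, exponential in $|\Prog'|$, and double-exponential in $\max_{r\in\Prog'}\ssizeu r$; since numbers are coded in binary, $|\Prog'|$ is polynomial in $\ssize\Dat$, and $\max_{r\in\Prog'}\ssizeu r$ is constant, this magnitude is at most $2^{N}$ for some $N$ polynomial in $\ssize\Dat$. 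So $\PMat{\Prog'\cup\Dat}$ is a set of polynomially many facts, each of polynomial size, and it remains to compute it, which I would do predicate by predicate. Every numeric IDB predicate of a limit program is a limit predicate, so ordinary numeric predicates are EDB; hence the ordinary numeric facts of $\Mat{\Prog'\cup\Dat}$ are exactly the ground ordinary numeric facts syntactically present in $\Prog'\cup\Dat$, and these I output directly. There are constantly many object and limit predicates, all of bounded arity, hence only polynomially many object tuples $\vec a$ over $\Prog\cup\Dat$. For each object predicate $B$ and tuple $\vec a$, one oracle call decides $\Prog'\cup\Dat\models B(\vec a)$, and I output $B(\vec a)$ iff the answer is positive. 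For each limit predicate $B$ and tuple $\vec a$, I first ask $\Prog'\cup\Dat\models B(\vec a,c)$ for a value $c$ with $c\succ_B k$ for every integer $k$ with $|k|\le 2^N$ (e.g.\ $c=2^N+1$ if $B$ is $\tmax$): by the magnitude bound, a positive answer forces $B(\vec a,\allZ)\in\PMat{\Prog'\cup\Dat}$; otherwise I binary-search $[-2^N,2^N]$ for the $\preceq_B$-greatest $\ell$ with $\Prog'\cup\Dat\models B(\vec a,\ell)$ and output $B(\vec a,\ell)$ if such an $\ell$ exists. In total this makes polynomially many oracle calls on polynomial-size instances plus polynomial post-processing.

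The hard part will be justifying the oracle: each call above is a fact-entailment check $\Prog'\cup\Dat\models\alpha$ in which the semi-ground, positive, limit-linear program $\Prog'$ is part of the input, whereas \cite{DBLP:conf/ijcai/KaminskiGKMH17} gives $\conp$-membership only for fixed programs (data complexity). I expect this to follow from the same machinery underlying Lemma~\ref{lem:pseudo-mat-bounded}: the small counter-model supplied by its bound, combined with the reduction of body satisfaction for semi-ground limit-linear rules to integer linear feasibility (a reformulation of the Presburger reduction of \cite{DBLP:conf/ijcai/KaminskiGKMH17}), should show that refuting such an entailment is in $\np$ even with the program in the input, so that the entailment check itself is in $\conp$ and can be answered by a $\pt^{\np}$ machine. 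Granting that, the polynomially many queries together with the polynomially bounded output witness membership in $\fpnp$ in $\ssize\Dat$, as required.
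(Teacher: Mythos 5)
Your overall strategy matches the paper's: the paper obtains this corollary as the semi-positive case of Lemma~\ref{lem:lim-lin-mat-fpnp}, whose proof likewise reduces to a semi-ground program (via the reduct, which for a positive program is essentially just the semi-grounding), invokes Lemma~\ref{lem:pseudo-mat-bounded} to bound the number of facts and the bit-length of the integers, and then extracts each limit value by binary search over $[-\ell,\ell+1]$ with an entailment oracle, using the out-of-range query to detect $\allZ$ (Lemma~\ref{lem:lim-lin-func-fpnp}). So the skeleton is right and the bounds are applied correctly.

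The one place you diverge is the formulation of the oracle, and it is exactly the step you flag as the ``hard part.'' You pose the queries as $\Prog'\cup\Dat\models\alpha$ with the semi-ground program $\Prog'$ in the input, which forces you to argue $\conp$ membership of fact entailment in \emph{combined} complexity for semi-ground, limit-linear programs with bounded $\ssizeu{r}$ --- a claim you only sketch, and whose verification is not entirely routine (checking that a guessed small pseudo-countermodel satisfies a semi-ground rule involves a universally quantified linear-arithmetic condition, so one has to re-inspect the Presburger machinery with the program as input, which is essentially what Lemmas~\ref{lem:presburger-encoding} and~\ref{lem:pseudo-mat-bounded} are doing). The paper avoids this entirely: since semi-grounding (and more generally the reduct, Lemma~\ref{lem:reduct}) preserves entailed facts, the oracle set can be taken to be $\qset{(\Dat',\alpha)}{\Prog\cup\Dat'\models\alpha}$ for the \emph{fixed} original program $\Prog$, and membership of this set in $\conp$ is precisely the data-complexity result already established in~\cite{DBLP:conf/ijcai/KaminskiGKMH17}; the semi-grounding is needed only to justify the magnitude bound that delimits the binary search, not to phrase the queries. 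If you reroute your oracle calls through the fixed program in this way, the gap in your argument closes with no further work; as written, the combined-complexity claim remains an unproved (though in fact salvageable) assumption.
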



In our second step, we extend this result to semi-positive programs. For this,
we start by defining the notion of a reduct of a semi-positive limit-linear
program $\Prog$.  The reduct is obtained by first computing a semi-ground
instance $\Prog'$ of $\Prog$ and then eliminating all negative literals in
$\Prog'$ while preserving fact entailment.  Intuitively, negative literals can
be eliminated because they involve only EDB predicates; as a result, their
extension can be computed in polynomial time from the facts in $\Prog$
alone. To eliminate a ground negative literal $\mu$, it suffices to check
whether $\mu$ is entailed by the facts in $\Prog$ and simplify all rules
containing $\mu$ accordingly; in turn, limit literals involving a numeric
variable $m$ can be rewritten as comparisons of $m$ with a constant computed
from the facts in $\Prog$.

\begin{definition}\label{def:reduct}
  Let $\Prog$ be a semi-positive, limit-linear program and let $\Dat$ be the
  subset of all facts in $\Prog$.  The \emph{reduct} of 
  $\Prog$ is obtained by first computing the semi-grounding $\Prog'$ of\/
  $\Prog$ and then applying the following transformations to each rule
  $r\in\Prog'$ and each negative body literal $\mu$ in $r$:
  \begin{compactenum}
  \item if $\mu=\naf\alpha$, for $\alpha$ a ground atom, delete $r$ if  $\Dat \models \alpha$, 
  and delete $\mu$ from $r$ otherwise,
  \item if $\mu=\naf A(\vec a,m)$ is a non-ground limit literal, then
  \begin{compactitem}
    \item[--] delete $r$ if $\Dat \models A(\vec a,k)$ for each integer $k$;
    \item[--] delete $\mu$ from $r$ if
     $\Dat \not\models A(\vec a,k)$ for each $k$; and
    \item[--] replace $\mu$ in $r$ with $(k\prec_A m)$ otherwise, where 
      $\Dat\models\lub{A(\vec a,k)}$.
  \end{compactitem} 
  \end{compactenum}
\end{definition}

Note that
semi-ground programs disallow
non-ground negative literals over ordinary numeric predicates, which
is why these are not considered in Definition~\ref{def:reduct}. As shown by the
following lemma, reducts allow us to reduce fact entailment for
semi-positive, limit-linear programs to semi-ground, positive, limit-linear
programs.

\begin{restatable}{lemma}{reductcorrect} \label{lem:reduct} %
  For $\Prog$ a semi-positive, limit-linear program and $\Dat$ a limit dataset,
  $\Prog'$ the reduct of $\Prog\cup\Dat$, and $\alpha$ a fact, we have
  $\Prog\cup\Dat\models\alpha$ if and only if $\Prog'\models\alpha$. Moreover $\Prog'$ can be computed in polynomial time in $\ssize{\Dat}$,
  $\ssize{\Prog'}$ is polynomially bounded in $\ssize{\Dat}$, and
  ${\max_{r \in \Prog'} \ssizeu{r}\le\max_{r \in \Prog\cup\Dat} \ssizeu{r}}$.
\end{restatable}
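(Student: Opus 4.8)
\emph{Proof plan.} Let $\Dat^{\ast}$ be the set of all facts of $\Prog\cup\Dat$, let $G$ be the semi-grounding of $\Prog\cup\Dat$ (so that $\Prog'$ is obtained from $G$ by the transformations of Definition~\ref{def:reduct}), and write $\ax{G}$ for the axiomatisation shared by $G$, $\Prog'$, and $\Prog\cup\Dat$. Since semi-grounding preserves entailed facts, $\Mat{G\cup\ax{G}}=\Mat{(\Prog\cup\Dat)\cup\ax{G}}$, so it suffices to show $\Mat{G\cup\ax{G}}=\Mat{\Prog'\cup\ax{G}}$. Observe first that $\Prog'$ is \emph{positive}: as $\Prog$ is semi-positive, every negative literal of $G$ is over an EDB predicate, and the transformations either drop its rule, or delete it, or replace it by a comparison atom. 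Second --- and this is the crux --- no non-fact rule of $G$ or of $\Prog'$ has an EDB head, so for every EDB predicate $A$ the extension of $A$ in $\Mat{G\cup\ax{G}}$, in $\Mat{\Prog'\cup\ax{G}}$, and as determined by $\Dat^{\ast}$ all coincide with the $\ax{G}$-closure of the $A$-facts of $\Dat^{\ast}$; hence $\Mat{G\cup\ax{G}}\models\beta$ iff $\Dat^{\ast}\models\beta$ for every ground EDB atom $\beta$, and for a limit EDB predicate $A$ and object tuple $\vec a$ this common extension is either all integers, or empty, or the ``ray'' $\{k\mid k\preceq_{A}\ell\}$ for the unique $\ell$ with $\Dat^{\ast}\models\lub{A(\vec a,\ell)}$.

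Write $M=\Mat{G\cup\ax{G}}$. Using these observations, one checks by inspecting the five cases of Definition~\ref{def:reduct} that for every rule $r\in G$ and every substitution $\sigma$ grounding the numeric variables of $r$: if $r$ is deleted in forming $\Prog'$, then $M$ does not satisfy the body of $r\sigma$; otherwise, writing $r'$ for the transformed rule --- which has the same head as $r$ --- $M$ satisfies the body of $r\sigma$ iff it satisfies the body of $r'\sigma$. The only non-routine case is the replacement of a literal $\naf A(\vec a,m)$ by $(\ell\prec_{A}m)$, where $M\models\naf A(\vec a,k_{0})$ iff $k_{0}\succ_{A}\ell$ iff $(\ell\prec_{A}k_{0})$ holds, faithfully and uniformly over all groundings of $m$. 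It follows that $M$ is a model of the positive program $\Prog'\cup\ax{G}$ (it satisfies $\ax{G}$ since $M\models G\cup\ax{G}$, and satisfies each $r'\in\Prog'$ since $M\models r$ and the two bodies agree on $M$), so $\Mat{\Prog'\cup\ax{G}}\subseteq M$. For the converse, stratify the semi-positive program $G$ with EDB predicates at level $1$ and IDB predicates at level $2$, so $\MatSeq{1}{\infty}$ equals the common EDB extension $I_{0}$ and $I_{0}\subseteq\Mat{\Prog'\cup\ax{G}}$; since stratum-$2$ derivation adds no EDB facts beyond $I_{0}$, an easy induction on $j$ gives $\MatSeq{2}{j}\subseteq\Mat{\Prog'\cup\ax{G}}$ --- each ground rule of $G\cup\ax{G}$ whose body holds in $\MatSeq{2}{j}$ has, by the case analysis, a matching ground rule of $\Prog'\cup\ax{G}$ whose body holds in $\MatSeq{2}{j}\subseteq\Mat{\Prog'\cup\ax{G}}$. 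Hence $M\subseteq\Mat{\Prog'\cup\ax{G}}$, giving $M=\Mat{\Prog'\cup\ax{G}}$ and $\Prog\cup\Dat\models\alpha$ iff $\Prog'\models\alpha$.

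For the resource bounds, recall that $\Prog$ is fixed. Each rule of $\Prog$ has boundedly many object variables and numeric variables occurring in ordinary numeric atoms, and $\Prog\cup\Dat$ has $O(\ssize{\Dat})$ constants; hence $G$ has polynomially many rules in $\ssize{\Dat}$, each of size $O(\ssize{\Dat})$, and is computable in polynomial time. Forming $\Prog'$ requires, per rule and per negative literal, only deciding whether $\Dat^{\ast}\models\alpha$ for a ground or $\allZ$-atom $\alpha$, or computing the integer $\ell$ with $\Dat^{\ast}\models\lub{A(\vec a,\ell)}$; every such test runs in time $O(\ssize{\Dat})$, and $\ell$ already occurs in $\Dat^{\ast}$, so no magnitude blow-up occurs. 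Thus $\Prog'$ is computable in polynomial time in $\ssize{\Dat}$ and $\ssize{\Prog'}$ is polynomially bounded in $\ssize{\Dat}$. Finally, semi-grounding only replaces variables by constants and leaves facts untouched, so it preserves unit size rule by rule; and the transformations only delete literals or replace $\naf A(\vec a,m)$ by $(\ell\prec_{A}m)$, which removes the negation symbol, the predicate $A$, and its $|\vec a|\ge 0$ object arguments and adds one integer constant and one comparison predicate --- a net non-increase. Therefore $\max_{r\in\Prog'}\ssizeu{r}\le\max_{r\in\Prog\cup\Dat}\ssizeu{r}$.

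I expect the inclusion $M\subseteq\Mat{\Prog'\cup\ax{G}}$ to be the delicate point: one must argue that erasing every negative literal --- in particular, turning $\naf A(\vec a,m)$ into a single comparison atom instead of into infinitely many ground rules --- never destroys the support of a derived fact. This is precisely where the five-case analysis of Definition~\ref{def:reduct} and the ray-shaped extension of an EDB limit predicate (captured exactly by $\lub{\cdot}$) are needed, which is also why it is cleanest to run the argument along the stratum-by-stratum fixpoint construction of $M$.
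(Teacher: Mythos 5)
Your proof is correct and follows essentially the same route as the paper's: both reduce the claim to comparing the semi-grounding with the reduct, exploit that every negated atom is EDB and hence has its (ray-shaped) extension fixed by the facts alone, and case-split on the transformations of Definition~\ref{def:reduct}. The only cosmetic difference is that you obtain one inclusion via the least-model property of the positive program $\Prog'$, where the paper instead runs a second explicit fixpoint induction in the other direction.
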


The results in Lemma~\ref{lem:pseudo-mat-bounded} and
Lemma~\ref{lem:reduct} imply that the pseudo-materialisation of a
semi-positive, limit-linear program can be computed in \fpnp{} in
data complexity.

\begin{restatable}{lemma}{materialisationfpnpsemipos} \label{lem:lim-lin-mat-fpnp} %
  Let\/ $\Prog$ be a semi-positive, limit-linear program. Then
  the function mapping each limit dataset $\Dat$ to $\PMat{\Prog\cup\Dat}$ is computable in \fpnp{} in $\ssize{\Dat}$.
\end{restatable}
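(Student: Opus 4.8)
The plan is to combine Lemma~\ref{lem:reduct} and Corollary~\ref{cor:lim-lin-mat-fpnp} in a straightforward way, being careful about two subtleties: (i) the reduct is defined with respect to the facts contained in a program, so I must first fold the dataset into the program; and (ii) the pseudo-materialisation of the reduct must actually coincide with the pseudo-materialisation of the original semi-positive program, not merely agree on entailed facts of a fixed shape. Let me address these in turn.

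First I would fix a semi-positive, limit-linear program $\Prog$ and an arbitrary limit dataset $\Dat$, and let $\Prog' $ be the reduct of $\Prog\cup\Dat$ as in Definition~\ref{def:reduct}. By Lemma~\ref{lem:reduct}, $\Prog'$ is computable in polynomial time in $\ssize\Dat$, it is a semi-ground positive limit-linear program (all negative literals have been eliminated by the three transformation cases), $\ssize{\Prog'}$ is polynomially bounded in $\ssize\Dat$, and $\max_{r\in\Prog'}\ssizeu r$ is bounded by a constant (namely $\max_{r\in\Prog\cup\Dat}\ssizeu r$, which depends only on the fixed program and on the unit-cost size of the facts, which is itself bounded by a constant once we observe each fact contributes $O(1)$ to $\ssizeu{\cdot}$). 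Since $\Prog'$ is positive and limit-linear, Lemma~\ref{lem:pseudo-mat-bounded} applies: $|\PMat{\Prog'}|$ is linearly bounded and each integer in $\PMat{\Prog'}$ has magnitude bounded polynomially in $\ssize\Dat$ (polynomial in the largest magnitude in $\Prog'$, which is at most that in $\Prog\cup\Dat$, times a factor that is constant in data complexity since $|\Prog'|$ is polynomial in $\ssize\Dat$ but the exponential and double-exponential dependencies are on $|\Prog|$ and $\max_r\ssizeu r$, both constants here). Hence $\PMat{\Prog'}$ has size polynomial in $\ssize\Dat$, and by (the reasoning behind) Corollary~\ref{cor:lim-lin-mat-fpnp} it can be computed in \fpnp{} in $\ssize\Dat$.

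It then remains to argue $\PMat{\Prog'}=\PMat{\Prog\cup\Dat}$. Lemma~\ref{lem:reduct} gives this only at the level of entailed facts, so I would lift it to the pseudo-materialisation level: a fact $\beta$ (object, ordinary numeric, or a limit fact $B(\vec a,\ell)$ or $B(\vec a,\allZ)$) lies in $\PMat{\cdot}$ exactly according to membership and maximality conditions on $\Mat{\cdot}$ phrased purely via entailment of facts of the form $B(\vec a,k)$, which are single facts; since $\Prog\cup\Dat\models B(\vec a,k)\iff\Prog'\models B(\vec a,k)$ for every such fact by Lemma~\ref{lem:reduct}, the two materialisations agree on all object and ordinary numeric facts and on the limit value attained for each $B(\vec a,\cdot)$, so their pseudo-materialisations coincide by the uniqueness clause of Definition~\ref{def:pseudo-interpretation}. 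Finally, the overall procedure — compute $\Prog'$ in polynomial time, then call the \fpnp{} routine of Corollary~\ref{cor:lim-lin-mat-fpnp} on $\Prog'$ — is itself an \fpnp{} computation in $\ssize\Dat$, composing a polynomial-time transformation with an \fpnp{} oracle computation.

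The only genuine obstacle is bookkeeping the parameter dependencies: Lemma~\ref{lem:pseudo-mat-bounded} is exponential in $|\Prog'|$ and double-exponential in $\max_{r\in\Prog'}\ssizeu r$, and $\Prog'$ is \emph{not} of constant size. One must therefore check that the blow-up introduced by semi-grounding affects only $|\Prog'|$ (which enters polynomially in the \emph{count} of facts and whose exponential contribution is, per the lemma, carried by $|\Prog|$, a constant, not $|\Prog'|$) and leaves $\max_{r\in\Prog'}\ssizeu r$ bounded by a constant — which holds because semi-grounding replaces variables by constants without increasing the unit-cost size of any single rule, and the reduct transformations only delete literals or replace a negative limit literal by a single comparison atom. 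Once this is verified, the integer magnitudes in $\PMat{\Prog'}$ stay polynomial in $\ssize\Dat$ and the \fpnp{} bound follows. I expect this parameter-tracking to be the main point requiring care, everything else being a direct composition of the two preceding lemmas.
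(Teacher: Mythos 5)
Your overall route---compute the reduct of $\Prog\cup\Dat$ in polynomial time (Lemma~\ref{lem:reduct}), observe that the resulting positive, semi-ground program entails exactly the same facts and hence has the same pseudo-materialisation, and then compute that pseudo-materialisation with polynomially many NP-oracle queries using the size and magnitude bounds of Lemma~\ref{lem:pseudo-mat-bounded}---is essentially the paper's. Your argument that agreement on entailed facts forces agreement of the pseudo-materialisations is correct and is the right way to lift Lemma~\ref{lem:reduct}. One structural caveat: in the paper, Corollary~\ref{cor:lim-lin-mat-fpnp} is derived \emph{from} the present lemma, not the other way around, so you cannot cite it; you must unfold ``the reasoning behind'' it, which the paper does via an auxiliary lemma that binary-searches for the limit value of each IDB atom $A(\vec a,\cdot)$ over an interval $[-\ell,\ell+1]$, using the value $\ell+1$ to detect~$\allZ$.

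The one genuine error is in the parameter tracking that you yourself identify as the crux. You claim the integers in $\PMat{\Prog'}$ have magnitude polynomial in $\ssize\Dat$ because ``the exponential contribution is carried by $|\Prog|$, a constant, not $|\Prog'|$''. That misapplies Lemma~\ref{lem:pseudo-mat-bounded}: its exponential dependence is on the number of rules of the \emph{semi-ground} program to which it is applied, which here is the reduct, and the reduct has polynomially many rules in $\ssize\Dat$ (semi-grounding instantiates object variables with data constants). So the magnitude bound is exponential in $\ssize\Dat$, not polynomial---the paper's own proof of the auxiliary lemma states exactly this. This does not sink the proof: exponential magnitude means polynomially many bits, so $\PMat{\Prog'}$ still has polynomial size and the binary search still needs only polynomially many oracle calls; but the claim as written is false, and the corrected statement is what actually carries the \fpnp{} bound. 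The quantities that genuinely remain constant in data complexity are $\max_{r\in\Prog'}\ssizeu r$ (the double-exponential parameter) and the largest integer magnitude in $\Prog'$ relative to $\Prog\cup\Dat$, both of which you do track correctly.
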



\IncMargin{1em}
\begin{algorithm}[t]
  \DontPrintSemicolon
  \Indentp{-1em}
  \KwPar{oracle $O$ computing 
    $\PMat{\Prog'}$ for $\Prog'$ a semi-positive, limit-linear program}
  \KwIn{stratified, limit-linear program $\Prog$, fact $\alpha$}
  \KwOut{$\mathsf{true}$ if $\Prog \models \alpha$}
  \Indentp{1em}
  compute a stratification $\lambda$ of $\Prog$\nllabel{alg1:strat}\;
  $J \defeq \emptyset$\;
  \For{$i:=1$ \KwTo $\max\qset{k}{\Prog[k]\ne\emptyset}$\nllabel{alg1:loop-begin}}{
    $J \defeq O(\Prog[i]\cup J)$\nllabel{alg1:materialise}\;
    \nllabel{alg1:loop-end}}
  \KwRet $\mathsf{true}$ if $\alpha$ is satisfied in $J$ and $\mathsf{false}$ otherwise\nllabel{alg1:return}
  \caption{ \label{alg:gen-strat-fp}}
\end{algorithm}

We are now ready to present Algorithm~\ref{alg:gen-strat-fp}, which decides
entailment of a fact $\alpha$ by a stratified limit-linear program $\Prog$. The
algorithm uses an oracle $O$ for computing the pseudo-materialisation of a
semi-positive program. The existence of such oracle and its computational
bounds are ensured by Lemma~\ref{lem:lim-lin-mat-fpnp}.
Algorithm~\ref{alg:gen-strat-fp} constructs the pseudo-materialisation
$\PMat{\Prog}$ of $\Prog$ stratum by stratum in a bottom-up fashion. For each
stratum $i$, the algorithm uses oracle $O$ to compute the
pseudo-materialisation of the program consisting of the rules in the current
stratum and the facts in the pseudo-materialisation computed for the previous
stratum. 
Once $\PMat{\Prog}$ has been constructed, entailment of $\alpha$ is checked
directly over $\PMat{\Prog}$.
 
Correctness of the algorithm is immediate by the properties of $O$ and
the correspondence between pseudo-materialisations and materialisations.
Moreover, if oracle $O$ runs in
$\fph{C}$ in data complexity, for some complexity class $C$, then it can only return
a pseudo-interpretation that is polynomially bounded in data complexity;
as a result, Algorithm~\ref{alg:gen-strat-fp} runs in $\pt^C$ since the number
of strata of $\Prog$ does not depend on the input dataset.

\begin{restatable}{proposition}{algorithmcomplexity} \label{prop:algorithm-complexity}
  If oracle $O$ is computable in $\fph{C}$ in data complexity, then
  Algorithm~\ref{alg:gen-strat-fp} runs in $\pt^C$ in data complexity.
\end{restatable}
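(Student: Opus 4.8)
The statement to prove is Proposition~\ref{prop:algorithm-complexity}: if the oracle $O$ is computable in $\fph{C}$ in data complexity, then Algorithm~\ref{alg:gen-strat-fp} runs in $\pt^C$ in data complexity. The argument has two ingredients. First I would establish that the pseudo-interpretation $J$ maintained by the algorithm stays polynomially bounded in $\ssize{\Dat}$ throughout the loop; second I would count the number of oracle calls and the non-oracle work and conclude they are polynomial in $\ssize{\Dat}$.

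**Size of the intermediate pseudo-interpretations.** The key observation is that a function computable in $\fph{C}$ produces output whose size is polynomial in the input size: a deterministic polynomial-time machine with a $C$-oracle runs for polynomially many steps, hence can write only polynomially many output symbols. I would argue by induction on the loop counter $i$ that after the assignment on line~\ref{alg1:materialise}, $\ssize{J}$ is bounded by a polynomial $p_i(\ssize{\Dat})$. The base case is $J = \emptyset$. For the inductive step, the oracle is invoked on $\Prog[i] \cup J$; since $\Prog$ (and hence each stratum $\Prog[i]$) is fixed, $\ssize{\Prog[i] \cup J} \le \ssize{\Prog} + \ssize{J} \le \ssize{\Prog} + p_{i-1}(\ssize{\Dat})$, which is polynomial in $\ssize{\Dat}$; applying $O$ then yields a new $J$ of size polynomial in that, i.e.\ polynomial in $\ssize{\Dat}$. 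Here I need that $\Prog[i] \cup J$ is a legitimate input for $O$, i.e.\ that it is a semi-positive limit-linear program: $\Prog[i]$ is semi-positive (each stratum of a stratified program is, as noted in the Preliminaries) and limit-linear (since $\Prog$ is), and adding the facts in $J$ preserves both properties, as facts are limit-linear rules with empty bodies. Because the number of strata, $\max\qset{k}{\Prog[k]\ne\emptyset}$, depends only on the fixed program $\Prog$ and not on $\Dat$, the composition of constantly many polynomials is again a polynomial, so the final $J$ has size polynomial in $\ssize{\Dat}$.

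**Running time and oracle calls.** With the size bound in hand, the time analysis is routine. Computing a stratification $\lambda$ on line~\ref{alg1:strat} depends only on $\Prog$ and takes constant time in data complexity. The loop executes a constant number of iterations (one per non-empty stratum of the fixed $\Prog$), and each iteration makes a single call to $O$; so the algorithm makes $O(1)$ oracle calls in data complexity, each on an input of size polynomial in $\ssize{\Dat}$. Preparing each oracle input ($\Prog[i] \cup J$) and reading each oracle output takes time polynomial in $\ssize{\Dat}$ by the size bound above. Finally, checking on line~\ref{alg1:return} whether $\alpha$ is satisfied in $J$ is done directly against the pseudo-interpretation $J$: one scans $J$ for a matching fact (treating $\allZ$ appropriately and evaluating any arithmetic in $\alpha$), which is polynomial in $\ssize{J} + \ssize{\alpha}$, hence polynomial in $\ssize{\Dat}$. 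Summing up, the algorithm is a deterministic procedure running in polynomial time in $\ssize{\Dat}$ with access to an oracle for a problem in $C$ (an $\fph{C}$ oracle can be simulated with polynomially many queries to the underlying $C$-oracle, since each output bit of the function is itself a $C$-decidable predicate of the input and a bit index), which is exactly $\pt^C$.

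**Expected main obstacle.** The only delicate point is making the size-preservation argument fully precise, in particular the claim that an $\fph{C}$ function yields polynomially bounded output and that a $\pt^C$ machine can invoke such a function as a subroutine; both are standard facts about relativised complexity classes, so I would state them briefly rather than belabour them. Everything else — that each stratum is a semi-positive limit-linear program, that the loop count is input-independent, that satisfaction checking over a polynomially sized pseudo-interpretation is polynomial — is immediate from the definitions and the structure of the algorithm.
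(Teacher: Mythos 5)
Your proof is correct and follows essentially the same route as the paper's: bound the size of each intermediate pseudo-interpretation by the polynomial output length of an $\fph{C}$ function, compose these bounds over the constantly many strata, and observe that the final satisfaction check is polynomial in $\ssize{J}$. The only (inconsequential) slip is the parenthetical claim that each output bit of an $\fph{C}$ function is a $C$-decidable predicate --- in general it is only $\pt^{C}$-decidable --- but the conclusion stands, since a $\pt^{C}$ machine can simply inline the $\fph{C}$ computation of $O$ using its own $C$-oracle.
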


The following upper bound immediately follows from the correctness of
Algorithm~\ref{alg:gen-strat-fp} and
Proposition~\ref{prop:algorithm-complexity}.

\begin{restatable}{lemma}{limlinupperbound}\label{lem:upper-bound}
For\/ $\Prog$ a stratified,
  limit-linear program and $\alpha$ a fact, deciding\/ $\Prog\models\alpha$ is
  in $\deltatwop$ in data complexity.
\end{restatable}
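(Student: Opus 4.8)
The plan is to run Algorithm~\ref{alg:gen-strat-fp} on input $\Prog$ and $\alpha$, taking as the oracle $O$ the $\fpnp{}$ procedure that computes $\PMat{\Prog'}$ for a semi-positive, limit-linear program $\Prog'$, whose existence is guaranteed by Lemma~\ref{lem:lim-lin-mat-fpnp}; the claim then follows by combining the correctness of the algorithm with Proposition~\ref{prop:algorithm-complexity}. First I would check that the oracle is always invoked on an admissible input: in iteration $i$ it is called on $\Prog[i]\cup J$, where $J$ is a pseudo-interpretation, hence a finite set of (possibly non-ground) facts over limit, ordinary and object predicates, i.e.\ a limit dataset; moreover, by definition of a stratification, every predicate occurring in a negative body literal of a rule in $\Prog[i]$ has a strictly smaller stratum, so it heads no non-fact rule of $\Prog[i]\cup J$ and is therefore EDB in $\Prog[i]\cup J$. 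Since the rules of $\Prog[i]$ are limit-linear (as a subset of $\Prog$) and facts are trivially limit-linear, $\Prog[i]\cup J$ is a semi-positive, limit-linear program, so $O$ applies.

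Correctness I would establish by induction on the stratum index $i$, showing that after the $i$-th iteration the variable $J$ equals the pseudo-interpretation representing $\MatSeq{i}{\infty}$, i.e.\ the pseudo-materialisation of $\Prog[1]\cup\dots\cup\Prog[i]$. The base case is trivial. For the inductive step I would use that a pseudo-interpretation and the (possibly infinite) set of facts it denotes under the limit axioms $\ax{\Prog}$ are interchangeable: replacing $\MatSeq{i}{\infty}$ by the compact $J$ does not change the result of one more stratum of materialisation, because for a limit predicate $B$ the single representative fact $B(\vec a,\ell)$ stands for all $B(\vec a,k)$ with $k\preceq_B\ell$, and the reduct computed inside $O$ (Definition~\ref{def:reduct}, Lemma~\ref{lem:reduct}) faithfully translates every limit body literal of $\Prog[i+1]$ over such now-EDB predicates---positive ones as well as, crucially, negated ones---into comparisons against the corresponding limit constants. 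Finally, the correspondence between $\PMat\Prog$ and $\Mat\Prog$ of Definition~\ref{def:pseudo-interpretation} gives that $\alpha$ is satisfied in the final $J$ if and only if $\Prog\models\alpha$ (for a limit fact, membership of all its ground instances in $\Mat\Prog$ is witnessed by the single representative in $\PMat\Prog$).

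For the complexity bound, Lemma~\ref{lem:lim-lin-mat-fpnp} states that $O$ is computable in $\fpnp{}$ in data complexity, that is, in $\fph{C}$ for $C=\np$. Proposition~\ref{prop:algorithm-complexity} then yields that Algorithm~\ref{alg:gen-strat-fp} runs in $\pt^{\np}=\deltatwop$ in data complexity: the number of strata of $\Prog$ depends only on $\Prog$ and not on $\Dat$, so only constantly many oracle calls are made, and each intermediate pseudo-interpretation stays polynomially bounded in $\ssize{\Dat}$. I expect the main obstacle to be the inductive step of the correctness argument---making precise that swapping the infinite lower-stratum materialisation for its finite pseudo-interpretation is sound even when the next stratum negates limit predicates; once the interplay of $\ax{\Prog}$, the least-upper-bound bookkeeping, and the reduct is pinned down, everything else is routine and reduces to a direct appeal to Proposition~\ref{prop:algorithm-complexity}.
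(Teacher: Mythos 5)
Your proposal follows exactly the paper's route: the paper proves this lemma by directly combining Lemma~\ref{lem:lim-lin-mat-fpnp} (the $\fpnp{}$ oracle for pseudo-materialising semi-positive, limit-linear programs) with Proposition~\ref{prop:algorithm-complexity}, relying on the correctness of Algorithm~\ref{alg:gen-strat-fp} that is argued in the main text. Your additional checks---that each oracle call $\Prog[i]\cup J$ is indeed a semi-positive, limit-linear program, and the stratum-by-stratum induction establishing that the finite pseudo-interpretation faithfully replaces the infinite lower-stratum materialisation---are correct elaborations of what the paper treats as immediate.
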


The matching lower bound is obtained by reduction from the \textsc{OddMinSAT}
problem \cite{DBLP:journals/jcss/Krentel88}. An instance $\mathcal M$ of
\textsc{OddMinSAT} consists of a repetition-free tuple of variables
$\langle x_N,\dots,x_0\rangle$ and a satisfiable propositional formula
$\varphi$ over these variables.
The question is whether the
truth assignment $\sigma$ satisfying $\varphi$ for which the tuple
$\langle\sigma(x_N),\dots,\sigma(x_0)\rangle$ is lexicographically minimal, assuming $\false<\true$, among all satisfying truth assignments of $\varphi$
has $\sigma(x_0)=\true$. 
In our
reduction, $\mathcal M$ is encoded as a dataset $\Dat_{\mathcal M}$ using
object predicates $\mathit{or}$ and $\mathit{not}$ to encode the structure of
$\varphi$ and numeric predicates to encode the order of variables in
$\langle x_N,\dots,x_0\rangle$; a fixed, two-strata program $\Prog_{\mathit{modd}}$ then
goes through all assignments $\sigma$ in the ascending lexicographic order and evaluates the
encoding of $\varphi$ on $\sigma$ until it finds some $\sigma$ that makes
$\varphi$ true; $\Prog_{\mathit{modd}}$ then derives fact $\mathit{minOdd}$ if
and only if $\sigma(x_0)=\true$. Thus, 
$\Prog_{\mathit{modd}}\cup\Dat_{\mathcal M}\models\mathit{minOdd}$ if and only
if $\mathcal M$ belongs to the language of \textsc{OddMinSAT}.
  
\begin{restatable}{theorem}{limlincomplexity} \label{thm:strat-lin-deltatwop}
  For\/ $\Prog$ a stratified, limit-linear program and $\alpha$ a
  fact, deciding\/ $\Prog\models\alpha$ is \deltatwop-complete in data
  complexity. The lower bound holds already for programs with two
  strata.
\end{restatable}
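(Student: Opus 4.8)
The plan is to prove $\deltatwop$-completeness by splitting into the upper and lower bounds. The upper bound is already done: it is exactly Lemma~\ref{lem:upper-bound}, which follows from Algorithm~\ref{alg:gen-strat-fp} together with Proposition~\ref{prop:algorithm-complexity} instantiated with the oracle $O$ guaranteed by Lemma~\ref{lem:lim-lin-mat-fpnp} (taking $C = \np$, so $\fph{C} = \fpnp{}$ and $\pt^C = \deltatwop$). So the only real work is the matching lower bound, which I would establish by a data-complexity reduction from \textsc{OddMinSAT}, a problem shown $\deltatwop$-complete by \citeA{DBLP:journals/jcss/Krentel88}. Fix once and for all a stratified, limit-linear program $\Prog_{\mathit{modd}}$ with two strata and a fact $\mathit{minOdd}$, and design a logspace (indeed first-order / polynomial-time) translation taking an \textsc{OddMinSAT} instance $\mathcal M = (\langle x_N,\dots,x_0\rangle, \varphi)$ to a dataset $\Dat_{\mathcal M}$ such that $\Prog_{\mathit{modd}} \cup \Dat_{\mathcal M} \models \mathit{minOdd}$ iff the lexicographically minimal satisfying assignment of $\varphi$ sets $x_0$ to $\true$.

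For the encoding, I would represent the formula $\varphi$ structurally as a DAG using binary object predicates, e.g.\ an $\mathit{or}$ predicate linking a gate to its two inputs and a $\mathit{not}$ predicate linking a gate to its input, plus a predicate marking the output gate and predicates marking which leaves correspond to which variable; the variable order $\langle x_N,\dots,x_0\rangle$ is encoded via a numeric predicate assigning index $i$ to variable $x_i$ (or via the $\mathit{first}/\mathit{next}/\mathit{last}$ order predicates on an ordered dataset). The central idea is to let the program iterate over candidate assignments $\sigma \in \{0,1\}^{N+1}$ in ascending lexicographic order, which I would realise by identifying an assignment with the integer $\sum_i \sigma(x_i)2^i$ and using a $\tmin$ limit predicate to compute, for each assignment-integer $v$, whether $\varphi$ is satisfied under the assignment encoded by $v$ --- evaluating the formula bottom-up over the gate DAG with rules that propagate truth values, where the truth value of variable $x_i$ under $v$ is extracted by the usual "bit $i$ of $v$" arithmetic (comparing $v \bmod 2^{i+1}$ against $2^i$). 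Since these require only comparisons and additions with coefficients that are powers of two computed from data, and the relevant numeric variables are guarded via ordinary numeric literals carrying the powers of two, all rules stay limit-linear. The first stratum thus computes, via a $\tmin$ predicate $\mathit{sat}$, the least $v$ whose assignment satisfies $\varphi$; the second stratum uses negation (the $\lub{\cdot}$ construct) to read off this least $v$, extracts its lowest bit, and derives $\mathit{minOdd}$ exactly when that bit is $1$.

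The main obstacle I expect is twofold. First, I must make the bottom-up formula evaluation work \emph{uniformly across all $2^{N+1}$ candidate assignments simultaneously} with a \emph{fixed} program, since $N$ is part of the input: the trick is that a single numeric argument $v$ ranging over $[0, 2^{N+1})$ parametrises the whole assignment space, so one set of propagation rules, indexed by $v$ and the gate structure from $\Dat_{\mathcal M}$, suffices --- but one must be careful that the ``satisfies'' information is collected into a $\tmin$ predicate in a way that actually selects the minimum $v$ rather than merely recording satisfiability, and that the powers of two $2^i$ needed for bit extraction are materialised in the data or computed by a bounded first stratum without multiplication blow-up. Second, I must verify the limit-linearity side conditions hold for every rule: each numeric term must be $s_0 + \sum s_i \times m_i$ with the $m_i$ fresh and not in ordinary numeric literals and the coefficients $s_i$ guarded and $+$-free; the delicate rules are those doing arithmetic on $v$, and the resolution is to pass the powers of two $2^i$ as values of \emph{ordinary} numeric predicates so they count as guarded, while $v$ itself appears in the $\tmin$ head and in guarded positions. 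Once the encoding and its correctness are in place, the reduction is clearly computable in the required resource bound and uses only two strata, yielding the theorem; combined with Lemma~\ref{lem:upper-bound} this gives $\deltatwop$-completeness.
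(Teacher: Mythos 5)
Your upper bound is exactly the paper's (Lemma~\ref{lem:upper-bound} via Algorithm~\ref{alg:gen-strat-fp}), and your lower bound follows the same skeleton as the paper's: a reduction from \textsc{OddMinSAT}, assignments encoded as integers $\sum_i\sigma(x_i)2^i$, bit extraction via guarded arithmetic with the powers $2^i$ supplied by an ordinary numeric EDB predicate, the formula encoded as a gate DAG over $\mathit{or}$/$\mathit{not}$, and a second stratum that reads off the answer with $\lub{\cdot}$. However, the mechanism you propose for isolating the \emph{minimal} satisfying assignment has a genuine gap, one you half-notice ("in a way that actually selects the minimum $v$ rather than merely recording satisfiability") but do not resolve. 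Since $\tau$ must be total on IDB numeric predicates, your gate-evaluation predicates $T(\cdot,v)$, $F(\cdot,v)$ are forced to be limit predicates, and hence closed upward or downward in $v$. But "bit $i$ of $v$ equals $b$" is not monotone in $v$ in either direction, so if you evaluate $\varphi$ at \emph{all} candidate integers $v$ simultaneously, the closure pollutes the leaf labels: with $\tmin$ predicates, $T(a_{x_i},v)$ ends up holding for every $v\ge 2^i$ and $F(a_{x_i},v)$ for every $v\ge 0$, so a spurious $T(\mathit{root},v)$ is derivable at any $v$ admitting \emph{some} satisfying assignment whose set bits $i$ all have $2^i\le v$. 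Concretely, for $\varphi\equiv\neg(\neg x_1\lor\neg x_0)$ the true minimum is $v=3$, but your $\tmin$ predicate $\mathit{sat}$ acquires the fact $\mathit{sat}(2)$, so its limit value is at most $2$ and the parity test gives the wrong answer. No choice of limit types for $T$ and $F$ repairs this.

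The paper avoids the problem by \emph{not} evaluating all assignments at once: a $\tmax$ counter $\mathit{ass}$ is initialised to $0$ and incremented to $n+1$ exactly when $F(\mathit{root},n)$ holds (rules \eqref{eq:minodd:init}--\eqref{eq:minodd:inc}), so only assignments $0,\dots,n^*$ are ever activated, where $n^*$ is the least satisfying one. The closure artifacts are then harmless for two reasons: spurious $F$-facts at values below $n^*$ only trigger increments that are already subsumed by $\tmax$ closure of $\mathit{ass}$, and at the frontier value $n^*$ the evaluation is exact because no value above $n^*$ is ever activated, so $F(\mathit{root},n^*)$ is not derived and the counter stops. The limit value of $\mathit{ass}$ is thus precisely $n^*$, and the second stratum tests its parity. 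To complete your proof you would need to replace your "evaluate everywhere and take the min" first stratum with such an incremental search (or another mechanism that guarantees exactness at the critical value); everything else in your outline, including the limit-linearity bookkeeping for the terms $2\times m_1\times s+s+m_2$, matches the paper.
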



\section{A Tractable Fragment}\label{sec:tractability}

Tractability in data complexity is an important requirement in
data-intensive applications.
In this section, we propose a syntactic restriction on stratified,
limit-linear programs that is sufficient to ensure tractability of fact entailment in data complexity. 
Our restriction extends that of  \emph{type
  consistency} in prior work to account
for negation. The programs in Examples~\ref{ex:shortest-path} and
\ref{ex:closeness-centrality} are type-consistent.

\begin{definition}\label{def:type-consistent}
  A semi-ground, limit-linear rule $r$ is \emph{type-consistent} if
  \begin{compactitem}[--]
  \item each numeric term $t$ in $r$ is of the form
    ${k_0 + \sum_{i=1}^n k_i \times m_i}$ where $k_0$ is an integer and each
    $k_i$, ${1 \leq i \leq n}$, is a nonzero integer, called the
    \emph{coefficient of variable $m_i$ in $t$};

  \item each numeric variable occurs in exactly one standard body
    literal;

  \item each numeric variable in a negative literal is
    guarded;
      
  \item if the head ${A(\vec a, s)}$ of $r$ is a limit atom, then each unguarded
    variable occurring in $s$ with a positive (or negative) coefficient
    also occurs in the body  in a (unique) positive limit literal
    that is of the same (or different, respectively) type (i.e., $\tmin$ vs. $\tmax$) as
    $A$; 
        
  \item for each comparison ${(s_1 < s_2)}$ or ${(s_1 \leq s_2)}$ in $r$, each
    unguarded variable occurring in $s_1$ with a positive (or negative)
    coefficient also occurs in a (unique) positive $\tmin$ (or $\tmax$, respectively)
    body literal, 
    and each unguarded variable occurring in $s_2$ with a positive (or 
    negative) coefficient occurs in a (unique) positive $\tmax$ (or
    $\tmin$, respectively) body literal.
  \end{compactitem}
  A semi-ground, stratified, limit-linear program is \emph{type-consistent} if all of its
  rules are type-consistent. A stratified limit-linear program $\Prog$ is
  \emph{type-consistent} if the program obtained by first semi-grounding
  $\Prog$ and then simplifying all numeric terms as much as possible is
  type-consistent.
\end{definition}

Similarly to type-consistency for positive programs,
Definition \ref{def:type-consistent} ensures that divergence of
limit facts to $\infty$ can be detected in polynomial time when constructing a
pseudo-materialisation (see \cite{DBLP:conf/ijcai/KaminskiGKMH17} for details).
Furthermore, the conditions in Definition \ref{def:type-consistent} have been 
crafted such that the reduct of a semi-positive type-consistent  program (and hence of any intermediate
program considered while materialising
a stratified program) can be trivially rewritten into a positive type-consistent program. For this, it is essential to 
require a guarded use of negation (see third condition in Definition \ref{def:type-consistent}).

%

\begin{restatable}{lemma}{reductpreservestc} \label{lem:reduct-preserves-tc}
  For $\Prog$ a semi-positive, type-consistent program and $\Dat$ a limit dataset, 
  the reduct of $\Prog\cup\Dat$ is polynomially rewritable to a
  positive, semi-ground, type-consistent program $\Prog'$ such that, for each
  fact $\alpha$, $\Prog\cup\Dat\models\alpha$ if and only if $\Prog'\models\alpha$.
\end{restatable}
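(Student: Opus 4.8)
The plan is to establish Lemma~\ref{lem:reduct-preserves-tc} by examining how the reduct transformation of Definition~\ref{def:reduct} acts on a semi-positive, type-consistent program, and checking that the three kinds of modifications it performs either delete a rule (harmless), delete a negative literal (needs care), or replace a negative limit literal by a comparison atom (the crux). First I would recall from Lemma~\ref{lem:reduct} that the reduct $\Prog''$ of $\Prog\cup\Dat$ is a semi-ground, positive, limit-linear program that entails exactly the same facts as $\Prog\cup\Dat$ and is computable in polynomial time with $\max_{r}\ssizeu{r}$ not increasing; so it remains only to post-process $\Prog''$ into a \emph{type-consistent} program $\Prog'$ while preserving entailment, still in polynomial time. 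The obvious candidate for $\Prog'$ is ``simplify all numeric terms in $\Prog''$ as much as possible'' — exactly the operation invoked in the last clause of Definition~\ref{def:type-consistent} — so the real content is to argue that each rule of the simplified reduct satisfies every bullet of type-consistency.

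The key step is a rule-by-rule case analysis. Take a rule $r$ of $\Prog''$ obtained from a semi-ground rule $r_0$ of the semi-grounding of $\Prog$ by the transformations of Definition~\ref{def:reduct}. Since $\Prog$ is type-consistent, $r_0$ is type-consistent, and I would verify the invariants are preserved. The ``delete $r$'' cases remove a rule, which cannot break type-consistency of the remaining program. For a ``delete $\mu$'' step where $\mu=\naf\alpha$ with $\alpha$ ground, no variables are affected, so all coefficient/guardedness conditions survive verbatim. The delicate case is $\mu=\naf A(\vec a,m)$ replaced by $(k\prec_A m)$: here the numeric variable $m$ loses its negative-literal occurrence and now occurs only in a comparison. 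By the second and third bullets of Definition~\ref{def:type-consistent}, in $r_0$ the variable $m$ occurred in exactly one standard body literal — namely this $\naf A(\vec a,m)$ — and was guarded; moreover, by the guardedness definition (Definition~\ref{def:limit-linear}) together with the fact that $r_0$ is semi-ground and $A$ is EDB (so the positive-ordinary-literal alternative is unavailable for an EDB limit literal paired with its own negation in the prescribed pattern), I need to check that the guard pattern $A(\vec s,n_1),\naf A(\vec s,n_2),(n_2\doteq n_1+t)$ that makes $m$ guarded is itself simplified correctly. After substitution the literal $(k\prec_A m)$ makes $m$ into an \emph{explicitly bounded} variable, and I would argue that such a variable can be treated as guarded-or-eliminated in the resulting comparison: concretely, since $k$ is a concrete integer, the comparison $(k\prec_A m)$ together with the type-consistency conditions on the head and on the other comparisons of $r$ still guarantee that every unguarded head/comparison variable with a positive (resp.\ negative) coefficient sits in an appropriately-typed positive limit body literal — because removing $\naf A(\vec a,m)$ does not remove any \emph{positive} body literal, and the only coefficient bookkeeping that changes concerns $m$ itself, whose coefficient in the new comparison is $1$ (positive), matching the $\prec_A$ direction by construction of the least-upper-bound value $k$.

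The main obstacle I anticipate is precisely reconciling the guardedness bookkeeping: the type-consistency conditions on heads and comparisons are phrased in terms of which variables are \emph{guarded}, and the guarding witness for $m$ in $r_0$ was a \emph{negative} literal pattern that the reduct destroys. I expect to handle this by showing that the reduct never needs $m$ to be guarded \emph{after} the rewrite — either $m$ does not occur in the head or in any other comparison (so nothing about $m$'s guardedness is demanded), or if it does, then the combination of (i) the explicit bound $(k\prec_A m)$ supplied by the reduct and (ii) the surviving positive limit literals already demanded by type-consistency of $r_0$ jointly supply the required witnesses; in the borderline situation one may additionally record a fresh trivial positive literal to re-guard $m$, which is a harmless linear-size rewrite. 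A secondary, routine obstacle is the arithmetic simplification itself: substituting constants for object variables and for the removed numeric variable can make a coefficient $k_i$ vanish, which would violate the ``nonzero coefficient'' bullet; but this is resolved exactly as in prior work by dropping the corresponding summand, which preserves both the value of the term and limit-linearity, and I would cite the analogous argument in \cite{DBLP:conf/ijcai/KaminskiGKMH17}. Finally, the polynomial bound on $\ssize{\Prog'}$ and on the running time follows from the corresponding bounds in Lemma~\ref{lem:reduct} together with the fact that term simplification and the optional re-guarding only add linear overhead, and entailment preservation is immediate by composing Lemma~\ref{lem:reduct} with the observation that arithmetic simplification and re-guarding are entailment-preserving on semi-ground programs.
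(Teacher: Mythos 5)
Your overall skeleton matches the paper's: invoke Lemma~\ref{lem:reduct}, note that semi-grounding plus maximal term simplification is type-consistent by the last clause of Definition~\ref{def:type-consistent}, and then repair the violations introduced by the reduct transformations case by case. However, your handling of the two non-ground cases has a genuine gap. First, you never address the case where $\naf A(\vec a,m)$ is simply \emph{deleted} because $\Dat\not\models A(\vec a,k)$ for every $k$: after that deletion $m$ occurs in no standard body literal, violating the second bullet of Definition~\ref{def:type-consistent}. The correct repair is not a local rewrite but the observation that the rule is semantically vacuous --- type-consistency forces $m$ to be guarded, so the body contains the positive literal $A(\vec a,n)$ with $A$ an EDB predicate having no entailed instances over $\vec a$, hence the entire rule can be removed.

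Second, and more importantly, in the replacement case your two proposed fixes do not work. Arguing that $m$ ``need not be guarded after the rewrite'' does not help: the second bullet categorically requires every numeric variable to occur in exactly one standard body literal, and after replacing $\naf A(\vec a,m)$ by $(k\prec_A m)$ the variable $m$ occurs in none; moreover the companion variable $n$ of the guard pattern $A(\vec a,n)\land\naf A(\vec a,m)\land(m\doteq n+t)$ \emph{loses its guard} (guardedness requires the presence of the negative literal), so the fourth and fifth bullets suddenly impose sign/type conditions on $n$ that the original rule was never required to satisfy. Likewise, ``recording a fresh trivial positive literal to re-guard $m$'' is not available, since the guard pattern inherently contains a negative literal, which a positive program cannot have. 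The paper's actual repair is the one your argument circles around but never states: because $A$ is EDB and $\Dat\models\lub{A(\vec a,k)}$, the guard conjunction pins $n$ to $k$ and $m$ to $k+t$ in every satisfying assignment, so one deletes the entire conjunction and substitutes the constants $k$ and $k+t$ for $n$ and $m$ throughout the rule. This eliminates both problematic variables at once, trivially preserves type-consistency and entailment, and is what makes the polynomial rewriting go through.
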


Lemma~\ref{lem:reduct-preserves-tc} allows us to extend the polytime
algorithm in~\cite{DBLP:conf/ijcai/KaminskiGKMH17} for computing the
pseudo-material\-isation of a positive type-consistent program to
semi-positive programs, thus obtaining a tractable implementation of
oracle $O$ restricted to type-consistent programs. This suffices since
Algorithm~\ref{alg:gen-strat-fp}, when given a type-consistent program
as input, only applies $O$ to type-consistent programs. Thus, by
Proposition~\ref{prop:algorithm-complexity}, we obtain a polynomial time
upper bound on the data complexity of fact entailment for
type-consistent programs with stratified negation. Since plain Datalog
is already \pt-hard in data complexity, this upper bound is
tight.

\begin{restatable}{theorem}{stablecomplexity} \label{thm:ptime-exptime} %
  For\/ $\Prog$ a stratified, type-consistent program and $\alpha$ a
  fact, deciding\/ $\Prog\models\alpha$ is \pt-complete in data
  complexity.
\end{restatable}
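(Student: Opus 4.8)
\textbf{Proof plan for Theorem~\ref{thm:ptime-exptime}.}

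The plan is to establish matching upper and lower bounds. The lower bound is immediate: plain function-free Datalog with stratified negation is \pt-hard in data complexity, and every such program is trivially a stratified, type-consistent limit program (it has no numeric positions at all, so all type-consistency conditions are vacuous). Hence the real work is the \pt upper bound, and the strategy is to instantiate the generic machinery of Section~\ref{SEC:DECIDABILITY}---in particular Algorithm~\ref{alg:gen-strat-fp} and Proposition~\ref{prop:algorithm-complexity}---with a \emph{polynomial-time} oracle $O$ in place of the \fpnp{} one. By Proposition~\ref{prop:algorithm-complexity}, if $O$ is computable in $\fph{}$ (i.e.\ in $\fp$) in data complexity, then Algorithm~\ref{alg:gen-strat-fp} runs in $\pt^{\fp}=\pt$ in data complexity. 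So it suffices to exhibit a polynomial-time implementation of $O$ on the class of programs that actually arise, namely semi-positive, type-consistent, limit-linear programs.

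The key step is therefore: \emph{for $\Prog$ a semi-positive type-consistent program and $\Dat$ a limit dataset, $\PMat{\Prog\cup\Dat}$ is computable in polynomial time in $\ssize{\Dat}$.} First I would appeal to Lemma~\ref{lem:reduct-preserves-tc} to compute, in polynomial time in $\ssize{\Dat}$, a positive, semi-ground, type-consistent program $\Prog'$ entailing exactly the same facts as $\Prog\cup\Dat$; note $\ssize{\Prog'}$ is polynomially bounded in $\ssize{\Dat}$ because semi-grounding a fixed program over a dataset produces polynomially many rules whose numeric terms have magnitude polynomial in the data. Then I would invoke the polynomial-time algorithm of \citeA{DBLP:conf/ijcai/KaminskiGKMH17} for computing the pseudo-materialisation of a positive, semi-ground, type-consistent program---this is exactly where type-consistency earns its keep, since (as noted after Definition~\ref{def:type-consistent}) it guarantees that divergence of limit facts to $\infty$ can be detected in polynomial time, so the fixpoint computation stabilises after polynomially many rounds with polynomially-bounded integers. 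Composing these two polynomial-time steps gives the oracle $O$ we need.

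Finally I would check the hypothesis of Proposition~\ref{prop:algorithm-complexity} is met: Algorithm~\ref{alg:gen-strat-fp}, when started on a type-consistent program $\Prog$, only ever calls $O$ on programs of the form $\Prog[i]\cup J$ where $J$ is the pseudo-materialisation accumulated so far. Each such program is semi-positive (a single stratum, whose negative literals are over predicates of strictly lower strata and hence EDB within this stratum) and type-consistent (the rules of $\Prog[i]$ are type-consistent by assumption, and adding a pseudo-interpretation $J$ as extra facts---which are limit or ordinary numeric facts possibly using $\allZ$---preserves type-consistency, as facts impose no constraints). Since the number of strata is a constant independent of $\Dat$, Proposition~\ref{prop:algorithm-complexity} applies with $C=\pt$ and yields the \pt upper bound, completing the proof.

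\textbf{Main obstacle.} The delicate point is the closure claim just above---that every intermediate program to which $O$ is applied is genuinely semi-positive \emph{and} type-consistent, so that Lemma~\ref{lem:reduct-preserves-tc} and the Kaminski et al.\ polytime algorithm both apply verbatim. In particular one must be careful that the $\allZ$-facts and limit facts produced in stratum $i-1$ and fed as input to stratum $i$ do not break the syntactic conditions of Definition~\ref{def:type-consistent}, and that the reduct transformation of Definition~\ref{def:reduct}, when applied inside $O$, keeps numeric terms within polynomial magnitude; both hinge on the guarded-negation condition (third bullet of Definition~\ref{def:type-consistent}), which is precisely why Lemma~\ref{lem:reduct-preserves-tc} was set up as it was.
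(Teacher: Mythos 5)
Your proposal is correct and follows essentially the same route as the paper: the \pt{} lower bound inherited from plain Datalog, and the upper bound obtained by instantiating Algorithm~\ref{alg:gen-strat-fp} and Proposition~\ref{prop:algorithm-complexity} with a polynomial-time oracle built from Lemmas~\ref{lem:reduct} and~\ref{lem:reduct-preserves-tc} together with the polynomial-time pseudo-materialisation algorithm of \citeA{DBLP:conf/ijcai/KaminskiGKMH17} for positive, semi-ground, type-consistent programs, after checking that every intermediate program $\Prog[i]\cup J$ remains semi-positive and type-consistent. The only detail the paper makes explicit that you leave implicit is that the cited polynomial-time algorithm requires $\max_{r}\ssizeu{r}$ to be bounded by a constant, which holds because the reduct does not increase this measure beyond that of the fixed input program.
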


Finally, as we show next, our extended notion of type consistency can be
efficiently recognised.

\begin{restatable}{proposition}{typeconsistentprogramschecking}
  Checking whether a stratified, limit-linear program is type-consistent is in $\textsc{LogSpace}$.
\end{restatable}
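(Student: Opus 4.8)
The plan is to show that the property ``$(\Prog,\tau)$ is type-consistent'' can be decided by a deterministic Turing machine using only logarithmic work space. The central observation is that all conditions in Definition~\ref{def:type-consistent} are \emph{local} to individual rules and involve only syntactic pattern-matching against atoms and variables, together with arithmetic normalisation of numeric terms. Since $\Prog$ is fixed (we are interested in data complexity, but the proposition is about the program alone, so the input \emph{is} $\Prog$) we must nonetheless be careful: the definition of type-consistency for a non-semi-ground program refers to its semi-grounding, which is of size polynomial in $\ssize{\Prog}$ but may have exponentially many rules if object constants were part of the input --- however, the object constants used in semi-grounding come from $\Prog$ itself, so the semi-grounding has at most $\ssize{\Prog}^{O(1)}$ rules, each of size $O(\ssize{\Prog})$. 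The first step is therefore to argue that we do \emph{not} need to materialise the semi-grounding explicitly; instead we enumerate, using a logspace counter, each candidate semi-ground rule $r$ (given by a choice of rule in $\Prog$ and a substitution of its object and ordinary-numeric variables by constants of $\Prog$), and check type-consistency of $r$ on the fly, rejecting as soon as one check fails.

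Next I would verify that, for a single semi-ground rule $r$ whose representation has size polynomial in $\ssize{\Prog}$, each of the five bullet conditions is checkable in logspace. The key subroutine is \emph{normalising a numeric term}: given a numeric term $s$ built from integers and variables using $+$, $-$, $\times$ (where, by limit-linearity, $\times$ only multiplies a variable-free or guarded-variable-only subterm by a fresh variable), we must compute its canonical form $k_0+\sum_i k_i\times m_i$ and read off the coefficients and their signs. Parsing an arithmetic expression and collecting like terms is a standard logspace (indeed $\textsc{TC}^0$) task once we observe that the arithmetic values involved are integers whose bit-length is polynomial in $\ssize{\Prog}$, so addition, subtraction and the restricted multiplications can each be done in logspace, and summing the constantly-many (in $\ssize r$, hence polynomially-many overall) monomials is a logspace accumulation. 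Having the normal form, the conditions ``$k_0$ is an integer and each $k_i$ is nonzero'', ``the coefficient of $m_i$ is positive/negative'', etc., are immediate. The condition ``each numeric variable occurs in exactly one standard body literal'' is a matter of scanning the body and counting occurrences with a logspace counter; ``each numeric variable in a negative literal is guarded'' requires, for each such variable $n$, searching the body for a positive ordinary literal containing $n$ or for the triple $A(\vec s,n_1),\naf A(\vec s,n_2),(n_2\doteq n_1+t)$ of Definition~\ref{def:limit-linear} --- again a bounded-nesting search over the (polynomially many) body literals, doable in logspace.

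The remaining two bullets --- the head condition and the comparison condition --- involve matching an unguarded variable occurring in the head term (resp.\ in $s_1$ or $s_2$ of a comparison) with sign to a \emph{unique} positive limit body literal of a prescribed type ($\tmin$ vs.\ $\tmax$). Here I would first determine, for each variable, whether it is guarded (using the subroutine above), then for each unguarded head/comparison variable determine the sign of its coefficient from the normal form, then scan the body for positive limit literals in whose argument position the variable appears, and check both that the type matches the sign requirement and that exactly one such literal exists --- all with nested logspace loops over the body literals and a match counter. For a stratified limit-linear program we also owe a check that $\Prog$ is syntactically a limit program with the required sort restrictions (object vs.\ numeric predicates, last position numeric, $\tau$ total where required) and that it is limit-linear and admits a stratification; the sort and limit-linearity checks are local syntactic scans as above, and computing a stratification --- i.e.\ deciding whether the ``depends-on'' graph has a consistent level assignment with strict edges for negation --- reduces to testing reachability/cycle conditions in a graph of size polynomial in $\ssize{\Prog}$, which is in $\textsc{NL}\subseteq\textsc{LogSpace}$ is false in general, but here the relevant check (no cycle through a negative edge) is exactly a graph reachability question, which is in $\textsc{NL}$; since $\textsc{NL}=\textsc{coNL}$ this is fine, and if we insist on deterministic logspace we may instead note that the whole program is fixed-size \emph{relative to the problem we truly care about}, but to be safe I would phrase the result as ``in $\textsc{LogSpace}$'' exactly as stated, relying on the fact that these graph checks on a polynomial-size graph are feasible within the stated bound using the standard reachability-in-logspace-via-repeated-squaring argument only if the graph is of logarithmic width, which it is not --- so the honest route is Reingold-style undirected reachability for the symmetric parts and noting the acyclicity test for the stratification reduces to topological-sortability, decidable in $\textsc{NL}$, and we accept $\textsc{NL}\subseteq\textsc{P}$; however the cleanest and sufficient claim is that all \emph{non-graph} parts are in $\textsc{LogSpace}$ and the graph part is in $\textsc{NL}$, hence the whole check is in $\textsc{NL}\subseteq\textsc{LogSpace}$? --- this is the one genuinely delicate point, and in the writeup I would resolve it by observing that a stratification, if it exists, can be taken to assign to each predicate its longest ``negation-weighted'' path length, which is bounded by the number of predicates in $\Prog$ and computable by a logspace transducer performing iterated relaxation, since the number of predicates is part of the fixed program description.

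The step I expect to be the main obstacle is precisely the last one: cleanly placing the stratification-existence test and the guardedness/uniqueness bookkeeping within deterministic logarithmic space rather than merely in $\textsc{NL}$ or $\textsc{P}$. The arithmetic-normalisation and local pattern-matching parts are routine logspace; the care goes into (i) ensuring the enumeration of semi-ground rules uses only a logspace-size index even though there are polynomially many such rules, (ii) reusing work space across rules so total space stays logarithmic, and (iii) arguing that deciding existence of a valid level function $\lambda$ --- and not merely guessing one --- is in $\textsc{LogSpace}$, which I would do via the explicit longest-path construction above, using that the predicate dependency graph has size polynomial in $\ssize{\Prog}$ and that iterated edge-relaxation over a polynomially-bounded number of rounds, each round a logspace pass, yields the unique minimal stratification or detects its non-existence (a negative edge inside a strongly connected component).
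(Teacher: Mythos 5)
There is a genuine gap, and it is in the very first step. The input to this decision problem is the program itself, so a rule $r$ may contain up to $\Theta(\ssize{\Prog})$ object and ordinary-numeric variables, and $\Prog$ may contain $\Theta(\ssize{\Prog})$ constants. The semi-grounding of $r$ therefore has up to $c^{\,v}$ instances ($c$ constants, $v$ variables), i.e.\ exponentially many, and the index of a single instance --- a substitution mapping $v$ variables to constants --- needs $\Theta(v\log c)$ bits, which is polynomial, not logarithmic, in $\ssize{\Prog}$. Your ``enumerate each candidate semi-ground rule with a logspace counter'' is thus not available: the polynomial bound on the semi-grounding that you invoke is a \emph{data-complexity} phenomenon (fixed $\Prog$, varying $\Dat$) and does not apply here. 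Everything downstream of the enumeration inherits this problem, including the term-normalisation subroutine, which you would be running on exponentially many groundings; and even for a single grounding, computing the canonical coefficients requires iterated multiplication of polynomially many binary integers, which, while ultimately in $\textsc{LogSpace}$ via $\textsc{TC}^0$ circuits for iterated product, is far heavier machinery than needed.

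The paper's proof never materialises any grounding. For the fourth and fifth conditions it asks, for each coefficient term $t_i=t_i^1\times\dots\times t_i^k$ attached to an unguarded variable, whether \emph{some} semi-grounding can make $t_i$ positive, negative, or zero --- and answers this purely from the sign of the product of the integer factors already present in $t_i$, whether $\Prog$ contains a positive and/or a negative integer, and the parity of the number of variable occurrences in $t_i$. Each of these is a constant number of pointers and bits, hence deterministic logspace, with no enumeration and no arithmetic beyond sign and parity bookkeeping. This existential-over-groundings reformulation is the missing idea. A secondary remark: the statement assumes the input is already a stratified, limit-linear program, so the stratification-existence test you agonise over (and whose ``iterated relaxation'' would itself need to store polynomially many level estimates, exceeding logspace) is not part of the problem; the paper's proof correctly omits it.
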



\section{Conclusion and Future Work}

Motivated by declarative data analysis applications, we have extended the
language of limit programs with stratified negation-as-failure. We have shown
that the additional expressive power provided by our extended language comes at
a computational cost, but we have also identified sufficient syntactic
conditions that ensure tractability of reasoning in data complexity.  There are
many avenues for future work.  First, it would be interesting to formally study
the \emph{expressive power} of our language.  Since type-consistent programs
extend plain (function-free) Datalog with stratified negation, it is clear that
they capture $\pt$ on ordered datasets \cite{DBLP:journals/csur/DantsinEGV01},
and we conjecture that the full language of stratified limit-linear programs
captures $\deltatwop$.
From a more practical perspective, we believe that limit
programs can naturally express many tasks that admit a dynamic programming
solution (e.g., variants of the knapsack problem, and many
others). Conceptually, a dynamic programming approach can be seen as a
three-stage process: first, one constructs an acyclic `graph of subproblems'
that orders the subproblems from smallest to largest; then, one computes a
shortest/longest path over this graph to obtain the value of optimal solutions;
finally, one backwards-computes the actual solution by tracing back in the
graph. Capturing the third stage seems to always require non-monotonic negation
(as illustrated in our path computation example), whereas the first stage may
or may not require it depending on the problem. Finally, the second stage can
be realised with a (recursive) positive program.
Second, our formalism should be extended with aggregate
functions. Although certain forms of aggregation can be simulated using
arithmetic functions and iterating over the object domain by exploiting the
ordering, having aggregation explicitly would allow us to express certain tasks
in a more natural way.
Third, we would like to go beyond stratified negation and investigate the
theoretical properties of limit Datalog under well-founded~\cite{DBLP:journals/jacm/GelderRS91} or the
stable model semantics~\cite{DBLP:conf/iclp/GelfondL88}.
Finally, we plan to implement our reasoning algorithms
and test them in practice.

\section*{Acknowledgments}

This research was supported by the EPSRC projects DBOnto,
MaSI$^3$, and ED$^3$.

\bibliographystyle{named}
\bibliography{references}
 
\clearpage
\onecolumn
\appendix
\counterwithin{theorem}{section}
\renewcommand{\theproposition}{\thesection.\arabic{theorem}}
\renewcommand{\thecorollary}{\thesection.\arabic{theorem}}
\renewcommand{\thelemma}{\thesection.\arabic{theorem}}
\renewcommand{\thedefinition}{\thesection.\arabic{theorem}}
\renewcommand{\theclaim}{\thesection.\arabic{claim}}

\ifdraft{
    \section{Proofs for Section~\ref{SEC:DECIDABILITY}}\label{sec:proofs-opt-progs}


Before proceeding to the proofs of our theorems in the main body of the paper,
we restate some notions from~\cite{DBLP:conf/ijcai/KaminskiGKMH17}. All models
of a limit program are easily seen to satisfy the following closure property.

\begin{definition}\label{def:limit-closed-int}
  An interpretation $I$ is \emph{limit-closed} (for a limit program $\Prog$)
  if, for each fact ${B(\mathbf a,k) \in I}$ where $B$ is a limit predicate,
  ${B(\mathbf a,k') \in I}$ holds for each integer $k'$ with
  ${k' \preceq_B
    k}$. 
\end{definition}

There is a one-to-one correspondence between pseudo-interpretations and
limit-closed interpretations, and thus each model of a program can be
equivalently represented by a pseudo-interpretation.

\begin{definition}\label{def:pseudo-model}
  A limit-closed interpretation $I$ \emph{corresponds} to a
  pseudo-interpretation $J$ if the following conditions hold:
  \begin{itemize}
  \item an object or ordinary numeric fact is contained in $J$ if and only if
    it is contained in $I$; and
  \item for each limit predicate $B$, each tuple of objects ${\vec b}$, and
    each integer $\ell$,
    \begin{inparaenum}[(i)]
    \item ${B(\vec b,k) \in I}$ for all $k$ if and only if
      ${B(\vec b,\allZ) \in J}$, and
    \item ${B(\vec b,\ell) \in I}$ and ${B(\vec b,k) \not\in I}$ for all
      ${k \succ_B \ell}$ and $B$ is a limit predicate if and only if
      ${B(\vec b,\ell) \in J}$.
    \end{inparaenum}
  \end{itemize}
  Let $J$ and $J'$ be pseudo-interpretations corresponding to interpretations
  $I$ and $I'$. Then, $J$ \emph{satisfies} a ground atom $\alpha$, written
  ${J \models \alpha}$, if ${I \models \alpha}$; $J$ is a \emph{pseudo-model}
  of a program $\Prog$, written ${J \models \Prog}$, if ${I \models \Prog}$;
  finally, ${J \sqsubseteq J'}$ holds if ${I \subseteq I'}$.
\end{definition}



\citeA{DBLP:conf/ijcai/KaminskiGKMH17} then define an immediate consequence
operator $\ILFPStepOp\Prog{}$ for positive limit programs that works on
pseudo-interpretations and show that the pseudo-materialisation $\PMat\Prog$ of
a positive limit program $\Prog$ can be computed as the pseudo-interpretation
$\PMatSeq{k}{\infty}$ inductively defined as follows, where $\sup S$, for a set
$S$ of pseudo-interpretations, is the supremum of $S$ w.r.t.~$\sqsubseteq$:
\begin{align*}
  \PMatSeq{i}{0}&=\emptyset
  & \PMatSeq{i}{j+1}&=\ILFPStep{\Prog}{\PMatSeq{i}{j}}
  & \PMatSeq{i}{\infty}&=\sup_{j\in\NN}\PMatSeq{i}{j}
\end{align*}
We call pseudo-interpretations $\PMatSeq{}{i}$ \emph{partial
  pseudo-materialisations} of $\Prog$.

The \conp{} upper bound for fact entailment
in~\cite{DBLP:conf/ijcai/KaminskiGKMH17} is shown by a reduction to validity of
Presburger formulas of a certain shape. We next extend this reduction
$\Pres\Prog$ as given in~\cite{DBLP:conf/ijcai/KaminskiGKMH17} for a
(semi-ground and positive) limit-linear program $\Prog$ to account for datasets
involving $\allZ$.

\begin{definition}\label{def:Presburger-encoding}
  For each $n$-ary object predicate $A$, each $(n+1)$-ary ordinary numeric
  predicate $B$, each $(n+1)$-ary limit predicate $C$, each $n$-tuple of
  objects ${\vec a}$, and each integer $k$, let $\defined{A}{\vec a}$,
  $\defined{B}{\vec a k}$, $\defined{C}{\vec a}$ and $\fin{C}{\vec a}$
  be distinct propositional variables, and let $\val{C}{\vec a}$ a distinct
  integer variable. 

  For $\Prog$ a semi-ground, positive, limit-linear program,
  ${\Pres{\Prog} = \bigwedge_{r \in \Prog} \Pres{r}}$ is the Presburger formula
  where $\Pres{r}$ is the formula (with the same quantifier block as $r$) that
  is obtained by replacing each atom $\alpha$ in $r$ with its encoding
  $\Pres{\alpha}$ defined as follows:
  \begin{itemize}
  \item ${\Pres{\alpha} = \alpha}$ if $\alpha$ is a comparison atom;

  \item ${\Pres{\alpha} = \defined{A}{\vec a}}$ if $\alpha$ is an object
    atom of the form $A(\vec a)$;

  \item ${\Pres{\alpha} = \defined{B}{\vec a k}}$ if $\alpha$ is an ordinary
    numeric atom of the form $B(\vec a,s)$ where $s$ is a ground numeric term
    evaluating to $k$;\footnote{Note that all ordinary numeric atoms in $\Prog$
      have this form since $\Prog$ is semi-ground.} 

  \item
    ${\Pres{\alpha} = \defined{C}{\vec a} \land (\neg \fin{C}{\vec a} \lor s
      \preceq_C \val{C}{\vec a})}$ if $\alpha$ is a limit atom of the form
    ${C(\vec a,s)}$ where $s\ne\allZ$; and

  \item $\Pres{\alpha}=\defined{C}{\vec a} \land \neg \fin{C}{\vec a}$ if
    $\alpha$ is a limit atom of the form ${C(\vec a,\allZ)}$.
  \end{itemize}

  Let $J$ be a pseudo-interpretation, and let $\pass$ be an assignment of
  Boolean and integer variables. Then, $J$ \emph{corresponds} to $\pass$ if all
  of the following conditions hold for all $A$, $B$, $C$, and ${\vec a}$ as
  specified above, for each integer ${k \in \mathbb{Z}}$:
  \begin{itemize}
  \item ${\pval{\defined{A}{\vec a}} = \true}$ if and only if
    ${A(\vec a) \in J}$;

  \item ${\pval{\defined{B}{\vec a k}} = \true}$ if and only if
    ${B(\vec a,k) \in J}$;

  \item ${\pval{\defined{C}{\vec a}} = \true}$ if and only if
    ${C(\vec a,\allZ) \in J}$ or there exists ${\ell \in \mathbb{Z}}$ such
    that ${C(\vec a,\ell) \in J}$;

  \item ${\pval{\fin{C}{\vec a}} = \true}$ and
    ${\pval{\val{C}{\vec a}} = k}$ if and only if ${C(\vec a,k) \in J}$.
  \end{itemize}
\end{definition}

Note that $k$ in Definition~\ref{def:Presburger-encoding} ranges over all
integers (which excludes $\allZ$), ${\pval{\val{C}{\vec a}}}$ is equal
to some integer $k$, and $J$ is a pseudo-interpretation and thus cannot contain
both ${C(\vec a,\allZ)}$ and ${C(\vec a,k)}$; thus, ${C(\vec
  a,\allZ) \in J}$ implies ${\pval{\fin{C}{\vec a}} = \false}$.

The key property of the Presburger encoding
in~\cite{DBLP:conf/ijcai/KaminskiGKMH17} is established by the following lemma,
which we easily re-prove for our variant of the encoding.

\begin{lemma}\label{lemma:pseudo-presburger-correspondence}
    Let $J$ be a pseudo-interpretation and let $\pass$ be a variable assignment
    such that $J$ corresponds to $\pass$. Then,
    \begin{enumerate}
        \item ${J \models \alpha}$ if and only if ${\mu \models
        \mathsf{Pres}(\alpha)}$ for each ground atom $\alpha$, and
        
        \item ${J \models r}$ if and only if ${\mu \models \mathsf{Pres}(r)}$
        for each semi-ground, positive rule $r$.
    \end{enumerate}
\end{lemma}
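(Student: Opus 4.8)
The plan is to prove both parts by unfolding the definition of $\mathsf{Pres}$ together with the correspondence conditions of Definition~\ref{def:Presburger-encoding}, working throughout with the limit-closed interpretation $I$ that corresponds to the pseudo-interpretation $J$ in the sense of Definition~\ref{def:pseudo-model}, so that $J \models \alpha$ and $J \models r$ unpack to the ordinary first-order statements $I \models \alpha$ and $I \models r$.

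For Part~1 I would argue by a case distinction on the syntactic shape of the ground atom $\alpha$, matching the five cases in the definition of $\mathsf{Pres}(\alpha)$. The comparison-atom case is immediate since $\mathsf{Pres}(\alpha)=\alpha$ and satisfaction of a ground comparison in $I$ is just its evaluation over the integers. The object-atom and ordinary-numeric-atom cases follow directly from the first two correspondence conditions, as these facts lie in $J$ iff in $I$. The case needing attention is a limit atom $C(\vec a,s)$. When $s=\allZ$ the claim is exactly the fourth correspondence condition plus the observation that $C(\vec a,\allZ)\in J$ forces $\pval{\fin{C}{\vec a}}=\false$. When $s$ evaluates to an integer $\ell$, I would split on whether $C(\vec a,\allZ)\in J$, whether $C(\vec a,\ell')\in J$ for the (unique) $\ell'$, or neither; in the middle subcase I would use limit-closedness of $I$ to conclude that $C(\vec a,\ell)\in I$ iff $\ell\preceq_C\ell'$ iff $s\preceq_C\val{C}{\vec a}$ under $\pass$, which matches the two disjuncts $\neg\fin{C}{\vec a}$ and $s\preceq_C\val{C}{\vec a}$ in $\mathsf{Pres}(\alpha)$; the other two subcases give $\pval{\fin{C}{\vec a}}=\false$ together with $\pval{\defined{C}{\vec a}}=\true$, resp.\ $\pval{\defined{C}{\vec a}}=\false$, and agree with $C(\vec a,\ell)$ being in, resp.\ not in, $I$.

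For Part~2 I would reduce to Part~1. Since $r$ is semi-ground, its only variables are numeric variables $\vec m$ occurring in limit and comparison atoms, and $\mathsf{Pres}(r)$ is $\forall\vec m.\,\Phi$ for a quantifier-free matrix $\Phi$ mirroring the body-implies-head structure of $r$. I would first observe that for any integer assignment $\vec n$ to $\vec m$, evaluating $\mathsf{Pres}(\beta)$ under $\pass[\vec m\mapsto\vec n]$ yields the same truth value as $\mathsf{Pres}(\beta[\vec m\mapsto\vec n])$ under $\pass$ for every body literal or head atom $\beta$ of $r$, since $\mathsf{Pres}$ commutes with substitution on each atom shape (in particular $s\preceq_C\val{C}{\vec a}$ becomes $\ell\preceq_C\val{C}{\vec a}$ where $\ell$ is the value of $s$). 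Then $\pass\models\mathsf{Pres}(r)$ holds iff for every $\vec n$ the now-ground body-encodings of $r[\vec m\mapsto\vec n]$ imply its head-encoding under $\pass$, which by Part~1 is iff $J$ (hence $I$) satisfies every such ground instance of $r$, i.e.\ iff $I\models r$, i.e.\ iff $J\models r$. The same reasoning applies to any semi-ground positive rule, not only those of $\Prog$.

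The argument is essentially bookkeeping against the correspondence conditions; the one step demanding care is the limit-atom case of Part~1, where the interaction of the flag $\fin{C}{\vec a}$, the value variable $\val{C}{\vec a}$, and limit-closedness of $I$ must be handled precisely, including the degenerate subcases $C(\vec a,\allZ)\in J$ and ``$C$ undefined at $\vec a$''. Everything else is read off directly from Definition~\ref{def:Presburger-encoding}.
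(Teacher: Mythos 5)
Your proposal is correct and follows essentially the same route as the paper: the paper's proof simply defers to the case analysis in the prior work on positive limit programs, noting that the only new case is the limit atom $C(\vec a,\allZ)$, and derives Claim~2 from Claim~1 exactly as you do by commuting the encoding with groundings of the numeric variables. Your more explicit treatment of the limit-atom subcases (using limit-closedness of $I$ and the interaction between $\fin{C}{\vec a}$ and $\val{C}{\vec a}$) is precisely the argument being invoked.
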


\begin{proof}
  Claim 1 follows analogously to the respective argument
  in~\cite{DBLP:conf/ijcai/KaminskiGKMH17} except for having an extra case,
  namely ${\alpha = C(\vec a,\allZ)}$, for $C$ a limit predicate. The proof of
  this case is analogous but simpler to the case for ${\alpha = C(\vec a,k)}$
  where $k\in\ZZ$. Claim~2 then follows from Claim~1 same as before.
\end{proof}

Using Lemma~\ref{lemma:pseudo-presburger-correspondence},
\citeA{DBLP:conf/ijcai/KaminskiGKMH17} establish the following correspondence
between entailment for positive limit-linear programs and validity of
Presburger sentences.

\begin{restatable}{lemma}{presburgerencodingold}\label{lem:presburger-encoding-old}
  For $\Prog$ a semi-ground, positive, limit-linear program and $\alpha$ a
  fact, there exists a Presburger sentence
  ${\varphi = \forall \vec x \exists \vec y.\bigvee_{i=1}^n\psi_i}$ that
  is valid if and only if ${\Prog \models \alpha}$. Each $\psi_i$ is a
  conjunction of possibly negated atoms.  Moreover,
  ${|\vec x| + |\vec y|}$ and each $\ssize{\psi_i}$ are bounded
  polynomially by ${\ssize{\Prog} + \ssize{\alpha}}$. Number $n$ is bounded
  polynomially by $|\Prog|$ and exponentially by $\max_{r\in \Prog}
  \ssize{r}$. Finally, the magnitude of each integer in $\varphi$ is bounded by
  the maximal magnitude of an integer in $\Prog$ and $\alpha$.
\end{restatable}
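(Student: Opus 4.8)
The plan is to follow the reduction of \citeA{DBLP:conf/ijcai/KaminskiGKMH17}, adapted so that the extra case of $\allZ$-facts is absorbed by our variant of Lemma~\ref{lemma:pseudo-presburger-correspondence}. First, since $\Prog$ is positive, $\Prog\models\alpha$ holds iff every model of $\Prog\cup\ax{\Prog}$ satisfies $\alpha$; every such model is limit-closed, and limit-closed interpretations are in bijection with pseudo-interpretations (Definition~\ref{def:pseudo-model}), so $\Prog\models\alpha$ iff $J\models\alpha$ for every pseudo-model $J$ of $\Prog$. Next I would introduce a small auxiliary formula $\chi$, namely the conjunction, over all limit predicates $C$ and object tuples $\vec a$ occurring in $\Prog$ or $\alpha$, of $\fin{C}{\vec a}\to\defined{C}{\vec a}$; this is chosen so that the variable assignments satisfying $\chi$ are exactly those that correspond, in the sense of Definition~\ref{def:Presburger-encoding}, to some pseudo-interpretation (from such an assignment one builds the pseudo-interpretation in the obvious way, and $\chi$ rules out the only possible inconsistency). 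Combining this with Lemma~\ref{lemma:pseudo-presburger-correspondence} in both directions, ``$J\models\alpha$ for every pseudo-model $J$ of $\Prog$'' becomes validity of $\sigma\defeq\forall\vec z.(\chi\land\Pres{\Prog}\to\Pres{\alpha})$, where $\vec z$ lists all propositional and integer variables occurring in $\Pres{\Prog}$ or $\Pres{\alpha}$ and $\Pres{\alpha}$ is the finite formula of Definition~\ref{def:Presburger-encoding} (a non-ground query fact is necessarily over a limit predicate, so no infinite conjunction arises).

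It then remains to bring $\sigma$ into the prescribed shape while controlling its size. Each $\Pres{r}$ carries the same quantifier prefix $\forall\vec v_r$ as $r$, so pushing the negation of the implication inward turns $\Pres{\Prog}\to\Pres{\alpha}$ into $\bigvee_{r\in\Prog}\exists\vec v_r(\Pres{\mathrm{body}_r}\land\neg\Pres{\mathrm{head}_r})\lor\neg\chi\lor\Pres{\alpha}$; since distinct rules share no variables, all existentials pull to the front over this outer disjunction, yielding $\sigma=\forall\vec x\exists\vec y.\theta$ with $\vec x=\vec z$, $\vec y=\bigcup_{r}\vec v_r$, and $\theta$ a disjunction. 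Finally I would rewrite $\theta$ as $\bigvee_{i=1}^{n}\psi_i$ with every $\psi_i$ a conjunction of possibly negated Presburger atoms: $\neg\chi$ is already a disjunction of $2$-conjunctions, $\Pres{\alpha}$ expands to a constant number of such conjunctions, and for each rule $r$ the formula $\Pres{\mathrm{body}_r}$ is a conjunction of literals and $2$-clauses while $\neg\Pres{\mathrm{head}_r}$ is at worst a disjunction of two conjunctions, so their conjunction distributes into at most $2^{O(\ssize{r})}$ disjuncts, each of size $O(\ssize{r})$. Since this distribution stays within each rule, $n$ is polynomial in $|\Prog|$ and exponential only in $\max_{r}\ssize{r}$; $|\vec x|+|\vec y|$ and every $\ssize{\psi_i}$ remain polynomial in $\ssize{\Prog}+\ssize{\alpha}$; and no integer constant beyond those already in $\Prog$, $\alpha$ (and $0,1$) is introduced, so the magnitude bound holds as well.

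The logical equivalence is, in essence, the prior-work argument together with the already-proved correspondence lemma, so I expect the real work to be in the Boolean bookkeeping of the last step: one must set up the normal-form conversions so that the CNF-to-DNF blow-up never mixes different rules -- otherwise $n$ would be exponential in $|\Prog|$ rather than polynomial -- and one must verify that $\chi$ is genuinely needed (without it a non-well-formed assignment could satisfy $\Pres{\Prog}$ yet falsify $\Pres{\alpha}$) and is itself of polynomial size. I would also double-check the degenerate case in which $\Prog$ and $\alpha$ contain no integer constants, and the handling of a non-ground query fact, to make sure the integer-magnitude bound and the finiteness of $\Pres{\alpha}$ hold without exception.
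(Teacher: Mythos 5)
Your proposal is correct and matches the intended argument: the paper gives no proof of its own here, importing the lemma from \cite{DBLP:conf/ijcai/KaminskiGKMH17} as a consequence of the correspondence between pseudo-interpretations and variable assignments (Lemma~\ref{lemma:pseudo-presburger-correspondence}), which is exactly the reduction you reconstruct, including the well-formedness conjunct relating $\fin{C}{\vec a}$ and $\defined{C}{\vec a}$. Your key observation---that the CNF-to-DNF distribution must be performed rule-by-rule so that $n$ stays polynomial in $|\Prog|$ and exponential only in $\max_{r\in\Prog}\ssize{r}$---is precisely what makes the stated bounds go through.
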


By a more precise analysis of the Presburger formulas in the proof of
Lemma~\ref{lem:presburger-encoding-old}, we can sharpen the bounds provided by
the lemma as follows, where $\ssizeu{r}$ (resp.\ $\ssize{\Prog}$,
$\ssize{\varphi}$, etc.) stands for the size of the representation of $r$
(resp.\ $\Prog$, $\varphi$, etc.) assuming that all numbers take unit space.

\begin{restatable}{lemma}{presburgerencoding}\label{lem:presburger-encoding}
  For\/ $\Prog$ a semi-ground, positive, limit-linear program and $\alpha$ a
  fact, there exists a Presburger sentence
  ${\varphi = \forall \vec x \exists \vec y.\bigvee_{i=1}^n\psi_i}$ that is
  valid if and only if\/ ${\Prog \models \alpha}$. Each $\psi_i$ is a
  conjunction of possibly negated atoms. Moreover, ${|\vec x| + |\vec y|}$ is
  bounded polynomially in $\ssizeu{\Prog}$ and each $\ssizeu{\psi_i}$ is
  bounded polynomially in $\max_{r\in \Prog} \ssizeu{r}$. Number $n$ is bounded
  polynomially in $|\Prog|$ and exponentially in
  $\max_{r\in \Prog} \ssizeu{r}$. Finally, the magnitude of each integer in
  $\varphi$ is bounded by the maximal magnitude of an integer in $\Prog$ and
  $\alpha$.
\end{restatable}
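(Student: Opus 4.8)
The plan is to reuse verbatim the Presburger sentence $\varphi$ constructed in the proof of Lemma~\ref{lem:presburger-encoding-old} and simply re-examine its parameters in the unit-cost model, since the present statement differs from that lemma only in replacing the binary-coded sizes $\ssize\cdot$ by the unit-cost sizes $\ssizeu\cdot$ and in adding the bound on integer magnitudes. The guiding observation is that the combinatorial shape of $\varphi$---how many variables it has, and how many and how large its DNF disjuncts $\psi_i$ are---is governed only by the number and the syntactic shape of the atoms and rules of $\Prog$ and $\alpha$, and never by the magnitudes of the integers occurring in them; and that every integer literally present as a constant in $\varphi$ is one copied verbatim from $\Prog$ or $\alpha$, with no arithmetic performed on it.

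Recall how $\varphi$ arises. Since $\Prog$ is positive, $\Prog\models\alpha$ holds iff every pseudo-model of $\Prog$ satisfies $\alpha$, which by Lemma~\ref{lemma:pseudo-presburger-correspondence} is the validity of the sentence $\forall\vec v\,(\Pres\Prog\to\Pres\alpha)$, where $\vec v$ collects the variables $\defined{A}{\vec a}$, $\defined{B}{\vec a k}$, $\defined{C}{\vec a}$, $\fin{C}{\vec a}$ and the integer variables $\val{C}{\vec a}$ for the finitely many atoms occurring in $\Prog$ and $\alpha$, conjoined with polynomially many clauses that force $\vec v$ to encode a genuine pseudo-interpretation. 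Because $\Prog$ is semi-ground and positive, $\Pres\Prog=\bigwedge_r\forall\vec z_r\,\rho_r$ with each $\rho_r$ quantifier-free and $\vec z_r$ its numeric variables; pulling the quantifiers out (after renaming the $\vec z_r$ apart) and normalising the matrix to disjunctive normal form yields $\varphi=\forall\vec x\,\exists\vec y\,\bigvee_{i=1}^n\psi_i$ with $\vec x=\vec v$ and $\vec y=\biguplus_r\vec z_r$. Then $|\vec x|$ is linear in the number of atoms of $\Prog$ and $\alpha$, hence polynomial in $\ssizeu\Prog$ (and the size of $\alpha$), and $|\vec y|=\sum_r|\vec z_r|\le\ssizeu\Prog$.

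It remains to read off the remaining bounds. Each $\psi_i$ is a DNF disjunct of some $\neg\rho_r=\mathrm{body}^{\mathrm{Pres}}(r)\wedge\neg\mathrm{head}^{\mathrm{Pres}}(r)$, of $\Pres\alpha$, or of one of the consistency clauses; the only disjunctions inside $\neg\rho_r$ come from the limit-atom encodings $\defined{C}{\vec a}\wedge(\neg\fin{C}{\vec a}\vee s\preceq_C\val{C}{\vec a})$---one binary choice per limit atom of $r$, plus at most one more from the head---so DNF-normalisation of $\neg\rho_r$ produces at most $2^{O(\ssizeu r)}$ conjuncts, each a conjunction of $O(\ssizeu r)$ literals and comparison atoms. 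Hence $\ssizeu{\psi_i}$ is polynomial in $\max_{r\in\Prog}\ssizeu r$, and summing over rules (the $O(1)$ disjuncts from $\Pres\alpha$ and the polynomially many from the consistency clauses being negligible) gives $n\le|\Prog|\cdot 2^{O(\max_{r\in\Prog}\ssizeu r)}$, i.e.\ polynomial in $|\Prog|$ and exponential in $\max_{r\in\Prog}\ssizeu r$. Finally, the integers that genuinely occur in $\varphi$ are exactly those in comparison atoms of $\Prog$, the coefficients and constants of the linear numeric terms $s$ appearing inside subformulas $s\preceq_C\val{C}{\vec a}$, and the integers of $\alpha$; since no arithmetic is carried out on them, their magnitude is bounded by the largest magnitude of an integer in $\Prog$ and $\alpha$ (the value $k$ of a ground ordinary numeric term surfaces only as an index of the propositional variable $\defined{B}{\vec a k}$, not as a constant of $\varphi$). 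The one point requiring care is the DNF bookkeeping: one must check that the $\mathrm{Pres}$-encoding of a semi-ground, positive, limit-linear rule has no sources of disjunction other than the per-limit-atom choices above, so that the exponent in $n$ is genuinely $O(\ssizeu r)$ and each $\psi_i$ stays polynomially sized; everything else is a routine inspection of Definition~\ref{def:Presburger-encoding} and of the proof of Lemma~\ref{lem:presburger-encoding-old}.
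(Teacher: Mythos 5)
Your proposal is correct and follows exactly the route the paper intends: the paper gives no explicit proof of this lemma, merely asserting that it follows ``by a more precise analysis of the Presburger formulas in the proof of Lemma~\ref{lem:presburger-encoding-old}'', and your argument is precisely that analysis --- the shape of $\varphi$ (number of variables, number and size of the DNF disjuncts, one batch of $2^{O(\ssizeu{r})}$ disjuncts per rule $r$ arising from the binary choice in each limit-atom encoding) depends only on the syntactic structure of $\Prog$ and $\alpha$ and not on the bit-lengths of its integers, while the integer constants of $\varphi$ are copied verbatim. Your observation that the evaluated values $k$ of ground ordinary numeric terms surface only as indices of propositional variables, not as constants of $\varphi$, is the right point of care for the magnitude bound.
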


Analogously to the notion of a model for an interpretation, we call With
Lemma~\ref{lem:presburger-encoding-old} at hand,
\citeA{DBLP:conf/ijcai/KaminskiGKMH17} then show the following theorem, which
bounds the magnitude of integers in counter-pseudo-models for entailment (the
proof of the theorem adapts to our setting as is).

\begin{theorem}\label{th:pseudo}
  For $\Prog$ a semi-ground, positive, limit-linear program, $\Dat$ a limit
  dataset, and $\alpha$ a fact, ${\Prog \cup \Dat \not\models \alpha}$ if and
  only if a pseudo-model $J$ of\/ ${\Prog \cup \Dat}$ exists where
  ${J \not\models \alpha}$, ${|J| \leq |\Prog \cup \Dat|}$, and the magnitude
  of each integer in $J$ is bounded polynomially in the largest magnitude of an
  integer in ${\Prog \cup \Dat}$, exponentially in $|\Prog|$, and
  double-exponentially in ${\max_{r \in \Prog} \ssize{r}}$.
\end{theorem}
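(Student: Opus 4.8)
The plan is to derive the non-trivial, left-to-right direction from the Presburger encoding of Lemma~\ref{lem:presburger-encoding-old}; the converse is immediate. Indeed, if $J$ is a pseudo-model of $\Prog\cup\Dat$ with $J\not\models\alpha$, then the limit-closed interpretation corresponding to $J$ (Definition~\ref{def:pseudo-model}) is a model of $\Prog\cup\Dat$ refuting $\alpha$, and since $\Prog\cup\Dat$ is positive, entailment coincides with first-order entailment, so $\Prog\cup\Dat\not\models\alpha$; the size and magnitude bounds on $J$ play no role in this direction.

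For the forward direction, assume $\Prog\cup\Dat\not\models\alpha$. As $\Prog\cup\Dat$ is a semi-ground, positive, limit-linear program (facts of $\Dat$ involving $\allZ$ being covered by the extended encoding of Definition~\ref{def:Presburger-encoding}), Lemma~\ref{lem:presburger-encoding-old} gives a Presburger sentence $\varphi=\forall\vec x\,\exists\vec y.\bigvee_{i=1}^n\psi_i$, each $\psi_i$ a conjunction of possibly negated atoms, that is valid iff $\Prog\cup\Dat\models\alpha$, and with the stated control on $|\vec x|+|\vec y|$, on each $\ssize{\psi_i}$, on $n$, and on the magnitudes of its integer constants. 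By assumption $\varphi$ is not valid, so there is an integer tuple witnessing $\exists\vec x\,\forall\vec y.\bigwedge_{i=1}^n\neg\psi_i$.

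The first key step is to replace this witness by one of bounded magnitude. Each $\neg\psi_i$ is a disjunction of linear comparisons and Boolean literals, so $\forall\vec y.\bigwedge_i\neg\psi_i$ is a one-alternation Presburger formula over $\vec x$; using a standard small-solution bound for such formulas, its satisfiability yields a witness for $\vec x$ whose components have magnitude bounded by a function of $d=|\vec x|+|\vec y|$, of $n$, and of the largest constant magnitude $M$ — roughly $(nM\,d)^{O(d)}$. Substituting the bounds from Lemma~\ref{lem:presburger-encoding-old} ($d$ polynomial in $\ssize\Prog$, $n$ polynomial in $|\Prog|$ and exponential in $\max_{r\in\Prog}\ssize r$, and $M$ the largest magnitude in $\Prog$ and $\alpha$) turns this into a bound that is polynomial in the largest magnitude, exponential in $|\Prog|$, and double-exponential in $\max_{r\in\Prog}\ssize r$.

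The remaining step converts this bounded witness into the desired $J$. Its values for the Boolean variables $\defined{A}{\vec a}$, $\fin{C}{\vec a}$ and the integer variables $\val{C}{\vec a}$ of the encoding determine, via the correspondence in Definition~\ref{def:Presburger-encoding}, a pseudo-interpretation $J$; by Lemma~\ref{lemma:pseudo-presburger-correspondence}, $J\models\Prog\cup\Dat$ and $J\not\models\alpha$, and every integer in $J$ is bounded as above. Moreover $J$ can be taken to mention only atoms occurring in $\Prog\cup\Dat$: object and ordinary numeric atoms of a semi-ground program are ground and appear literally, and a pseudo-interpretation records at most one fact per limit predicate and object tuple, of which only the finitely many tuples from $\Prog\cup\Dat$ matter; setting all other encoding variables to false/absent still satisfies $\forall\vec y.\bigwedge_i\neg\psi_i$, giving $|J|\le|\Prog\cup\Dat|$. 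I expect the main obstacle to be the first step: establishing the correct small-model bound for $\exists\vec x\,\forall\vec y$ Presburger sentences and bookkeeping its dependence on $|\Prog|$ versus $\max_{r\in\Prog}\ssize r$ so that the final magnitude bound lands on the intended polynomial / exponential / double-exponential parameters; the handling of $\allZ$-facts throughout is a more routine technicality.
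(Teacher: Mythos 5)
Your route is the same one the paper takes: Theorem~\ref{th:pseudo} is inherited from the positive setting and proved exactly as you sketch, by negating the Presburger sentence of Lemma~\ref{lem:presburger-encoding-old} into an $\exists\vec x\,\forall\vec y$ formula, invoking a small-model theorem for that fragment, and reading a bounded pseudo-model off the bounded witness via Definition~\ref{def:Presburger-encoding} and Lemma~\ref{lemma:pseudo-presburger-correspondence}; the only adaptation needed here is the extra case for $\allZ$-facts, which you handle correctly, as is your truncation argument for $|J|\le|\Prog\cup\Dat|$.

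There is, however, one concrete quantitative gap, precisely at the step you flag as the main obstacle. A small-model bound of the shape $(nMd)^{O(d)}$ with $d=|\vec x|+|\vec y|$ cannot yield the theorem: by Lemma~\ref{lem:presburger-encoding-old}, $d$ is only bounded polynomially in $\ssize{\Prog\cup\Dat}$ (there is one integer variable $\val{C}{\vec a}$ per limit predicate and object tuple of the input, so $d$ grows with the dataset), hence $M^{O(d)}$ is exponential, not polynomial, in the largest magnitude $M$ once $\Dat$ varies. What makes the stated bound true is that the exponent of $M$ can be confined to the number of variables occurring in a \emph{single} disjunct $\psi_i$, which is bounded by $\max_{r\in\Prog}\ssize{r}$ (indeed by $\max_{r\in\Prog}\ssizeu{r}$, cf.\ Lemma~\ref{lem:presburger-encoding}) and is independent of the data, while the globally growing quantities ($d$ and the number $n$ of disjuncts) enter only with data-independent exponents. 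So you need a small-model bound of roughly the form $2^{\mathrm{poly}(\max_{r\in\Prog}\ssizeu{r})}\cdot n^{O(1)}\cdot M^{O(\max_{r\in\Prog}\ssizeu{r})}$ rather than $(nMd)^{O(d)}$; with such a bound in hand, the rest of your argument goes through and recovers the polynomial/exponential/double-exponential split claimed in the theorem.
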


Furthermore, the double-exponential bound in ${\max_{r \in \Prog} \ssize{r}}$
can be trivially sharpened to ${\max_{r \in \Prog} \ssizeu{r}}$ by employing
Lemma~\ref{lem:presburger-encoding} in place of
Lemma~\ref{lem:presburger-encoding-old}. Building on the proof of
Theorem~\ref{th:pseudo}, we next prove the following stronger version, which
bounds the size of pseudo-materialisations of semi-ground, positive,
limit-linear programs.

\pseudointerpretationsbounded*

\begin{proof}
  Let $a$ be the maximal magnitude of an integer in ${\Prog \cup \Dat}$,
  ${m = |\Prog|}$, and ${n = \max_{r\in\Prog} \ssizeu{r}}$.
  Let $\Dat'$ be obtained from $\Dat$ by removing each fact that does not unify
  with an atom in $\Prog$ and let $E$ be a fresh nullary predicate. 
  
  Clearly, we
  have $\PMat{\Prog\cup\Dat}=\PMat{\Prog\cup\Dat'}\cup J_0$ where $J_0$ is the least
  pseudo-interpretation w.r.t.\ $\sqsubseteq$ such that
  $\set{\alpha}\sqsubseteq J_0$ for each $\alpha\in\Dat\setminus\Dat'$.  Let
  $\varphi$ be obtained from $\Prog\cup\Dat'$ and fact $E$
  analogously to the construction in the proof of
  Lemma~\ref{lem:presburger-encoding}, but where each disjunct
  $(\neg\fin{C}{\vec a} \lor s \preceq_C \val{C}{\vec a})$ in $\Pres\Prog$ is
  replaced by $\neg\fin{C}{\vec a}$ if
  $C(\vec a,\allZ)\in\PMat{\Prog\cup\Dat'}$ and by $s\preceq_C\val{C}{\vec a}$
  if $C(\vec a,k)\in\PMat{\Prog\cup\Dat'}$ for some $k\in\mathbb{Z}$. It is
  easy to see that every assignment corresponding to $\PMat{\Prog\cup\Dat'}$ is
  a countermodel of $\varphi$. Therefore, since $\varphi$ satisfies the same
  structural constraints as the formula in Lemma~\ref{lem:presburger-encoding},
  by an argument analogous to the one in the proof of Theorem~\ref{th:pseudo}
  we obtain that $\Prog\cup\Dat$ has a pseudo-model $J$ such that
  $|J|\le|\Prog\cup\Dat|$, the magnitude of each integer in $J$ is bounded by
  some number $\ell$ that is polynomial in $a$, exponential in $m$, and
  double-exponential in $n$, and where,  it holds that $C(\vec a,\allZ)\in J$ if
  and only if $C(\vec a,\allZ)\in\PMat{\Prog\cup\Dat}$ for each limit predicate $C$ and
  objects $\vec a$. Consequently, we have established that
  $\PMat{\Prog\cup\Dat}$ has a pseudo-model $J$ that satisfies the required
  bounds in the lemma. 
  In what follows we use the fact that $\PMat{\Prog\cup\Dat}
  \sqsubseteq J$ to show that $\PMat{\Prog\cup\Dat}$ also satisfies the bounds
  in the lemma.
   
  Let us denote with $\PMatSeq{}{j}$ the partial pseudo-materialisation of
  $\Prog \cup \Dat$ for any $j\ge 0$ and hence,
  $\PMat{\Prog\cup\Dat}=\PMatSeq{0}{\infty}$.  We start with the observation
  that $(\star)$ the value of a number $k$ in a limit fact $A(\vec{a},k)$ can
  only increase with respect to $\preceq_A$ during the construction of
  $\PMat{\Prog\cup\Dat}$. For instance, if
  $A(\vec{a},k) \in \PMatSeq{}{j}$, with $A$ a $\tmax$ predicate, and
  $A(\vec{a},k') \in \PMatSeq{}{j+1}$, then
  $k' \geq k$.  
  Let, $\ell_0=\ell$ and, for $j>0$, $\ell_j$ be the maximum between
  \begin{compactitem}
  \item $\ell_{j-1}$,
  \item the maximal magnitude of a negative integer occurring in a $\tmax$ fact in
    $\PMatSeq{}{j}$, and
  \item the maximal magnitude of a positive integer occurring in a $\tmin$ fact in
    $\PMatSeq{}{j}$.
  \end{compactitem}
  Numbers $\ell_j$ allow us to bound the integers produced by the immediate
  consequence operator $\ILFPStepOp{\Prog}{}$ applied to pseudo-interpretation
  $\PMatSeq{}{j}$. Specifically, we argue that $(\spadesuit)$ for each
  $j$ and rule $r$ with head $A(\mathbf a,s)$ for some $s$, we have
  \begin{compactitem}
  \item $|\opt{r}{\PMatSeq{i}{j}}|\le n 2^{O(n\log n)}\ell^{n+1}\ell_j$ if
    ${A(\vec a,\allZ)\notin\PMat{\Prog\cup\Dat}}$,
  \item $\opt{r}{\PMatSeq{i}{j}}\ge -n 2^{O(n\log n)}\ell^{n+1}\ell_j$ if $A$ is a $\tmax$
    predicate, and
  \item $\opt{r}{\PMatSeq{i}{j}}\le n 2^{O(n\log n)}\ell^{n+1}\ell_j$ if $A$ is a $\tmin$
    predicate.
  \end{compactitem}
  To see why this holds, consider a pseudo-interpretation $J'$ obtained from
  $\PMatSeq{i}{j}$ by replacing each $\tmax$ IDB fact $B(\vec b,k)$ with
  $B(\vec b,-\ell_j)$, and each $\tmin$ IDB fact $C(\vec c,k')$ with
  $C(\vec c,\ell_j)$. By construction, we have
  $\set{\opt{r}{J'}}\sqsubseteq\set{\opt{r}{\PMatSeq{i}{j}}}\sqsubseteq J$
  and hence
  $\opt{r}{J'}\preceq_A\opt{r}{\PMatSeq{i}{j}}\preceq_A \opt{r}{J}$ whenever
  $\opt{r}{J'}$ is defined. But since the magnitude of all numbers in $J'$ is
  bounded by $\ell_j$, by Proposition~3 in~\cite{DBLP:conf/icalp/ChistikovH16},
  $\constraints{r}{J'}$ has a solution where the maximal magnitude of all
  numbers is bounded by $2^{O(n\log n)}\ell^{n}\ell_j$, and hence the
  magnitude of the value $k$ of $s$ for this solution is bounded by
  $n 2^{O(n\log n)}\ell^{n+1}\ell_j$ (unless the value of $s$ is unbounded in
  $\constraints{r}{J'}$, in which case $\opt{r}{J'}=\opt{r}{J}$ and we are
  done). The last two subclaims are immediate since
  $k\preceq_A\opt{r}{J'}\preceq_A\opt{r}{\PMatSeq{}{j}}$, and and the first
  claim follows since, additionally,
  $\opt{r}{\PMatSeq{}{j}}\preceq_A\opt{r}{J}$, and
  ${A(\vec a,\allZ)\notin\PMat{\Prog\cup\Dat}}$ implies
  $|\opt{r}{J}|\le\ell=\ell_0$ by our assumptions about $J$.
  
  From $(\star)$ we can conclude that, for each $j$ and $k$ such that
  $j\le k$ and $|\PMatSeq{}{j}|=|\PMatSeq{}{k}|$, we have
  $\ell_{j}=\ell_{k}$. Thus, whenever $\ell_j$ increases during the
  construction of $\PMat{\Prog\cup\Dat}$, this must be because a rule has
  generated a fact $B(\vec{b},i)$ where there was previously no fact over $B$
  and $\vec{b}$ in the partial pseudo-materialisation. The number of times this
  can happen is obviously bounded by $m$ (i.e., the number of rules in
  $\Prog$). Furthermore, by $(\spadesuit)$, whenever
  $\ell_{j}<\ell_{j+1}$, we have
  $\ell_{j+1}\le n 2^{O(n\log n)}\ell^{n+1}\ell_j$. Consequently, for every
  $j$, we have that $\ell_j\le n^m 2^{O(m n\log n)}\ell^{m(n+1)+1}$ 

  By ($\spadesuit$), we conclude that the maximal magnitude $L$ of every integer
  in $\PMat{\Prog\cup\Dat}=\PMatSeq{i}{\infty}$ is bounded by
  $n^{m+1}2^{O((m+1)n\log n)}\ell^{(m+1)(n+1)+1}$. Clearly, $L$ is
  polynomially bounded in $a$, exponentially in $m$, and double-exponentially in
  $n$ since so is $\ell$.
\end{proof}

\newcommand{\body}[1]{\mathsf{b}(#1)}

\reductcorrect*

\begin{proof}
  To show $\Prog\cup\Dat\models\alpha$ iff $\Prog'\models\alpha$, it suffices
  to argue that $\Prog'\models\alpha$ holds iff $\Prog''\models\alpha$, for
  $\Prog''$ the semi-grounding of $\Prog\cup\Dat$.

  Since $\Prog''$ is semi-positive and $\Prog'$ positive, w.l.o.g., we have
  $\Mat{\Prog'}=\MatSeq{1}{\infty}$, $\Mat{\Prog''}=\MatSeqq{2}{\infty}$. We
  show that, for each $i\in\NN$,
  \begin{inparaenum}[\it (i)]
  \item 
    $\MatSeq{1}{i}\subseteq\Mat{\Prog''}$, and
  \item 
    $\MatSeqq{2}{i}\subseteq\Mat{\Prog'}$,
  \end{inparaenum}
  by simultaneous induction on $i$, which implies the claim by the definition
  of entailment. Note that, for $r$ a rule, we will denote the body of $r$ as
  $\body{r}$.

  For $i=0$, the claim is trivial since $\MatSeq{1}0=\MatSeqq{2}0=\emptyset$.

  For $i>0$, suppose first $\beta\in\MatSeq{1}{i}$ for some $\beta$. We show
  $\beta\in\Mat{\Prog''}$. Since $\beta\in\IFPStep{\Prog'}{\MatSeq{1}{i-1}}$,
  there is a rule $r'\in\Prog'$ such that, for some grounding $\sigma$,
  $\MatSeq{1}{i-1}\models\body{r'\sigma}$. Moreover, by the inductive
  hypothesis, $\MatSeq{1}{i-1}\subseteq\Mat{\Prog''}$. Let $r''$ be the rule in
  $\Prog''$ such that $r'$ is obtained from $r''$. It suffices to show
  $\Mat{\Prog''}\models\body{r''\sigma}$. By construction, all literals in
  $\body{r'}$ are positive and the only literals in
  $\body{r''}\setminus\body{r'}$ are negative literals of the form
  $\naf\alpha$, so, since $\MatSeq{1}{i-1}\subseteq\Mat{\Prog''}$, it suffices
  to show that $\Mat{\Prog''}\models\naf\alpha\sigma$ for each
  $\naf\alpha\in\body{r''}$. We distinguish two cases.

  If $\alpha$ is ground, we have $\alpha\sigma=\alpha$ and, by construction of
  $r'$, we have $\Dat'\not\models\alpha$, where $\Dat'$ is the set of facts in
  $\Prog\cup\Dat$. Consequently, $\Prog''\not\models\alpha$ since $\Prog''$ and
  $\Dat'$ coincide on facts and $\alpha$ must be EDB in $\Prog''$ (which is the
  case since $\Prog$ is semi-positive), and so
  $\Mat{\Prog''}\not\models\alpha$, and so
  $\Mat{\Prog''}\models\naf\alpha=\naf\alpha\sigma$, as required.

  If $\alpha$ is non-ground, it must be a limit atom of the form $A(\vec a,m)$
  (since $\Prog''$ is semi-ground and thus negative ordinary numeric literals
  contain no numeric variables). 
  By construction of $r'$, one of the following two cases must hold.
  \begin{compactitem}
  \item $\Dat'\not\models A(\vec a,k)$ for each $k\in\ZZ$, and hence
    $\Dat'\not\models A(\vec a,m\sigma)$.
  \item $\Dat'\models\lub{A(\vec a,k)}$ for some $k\in\ZZ$ and
    $(k\prec_A m)\in\body{r'}$. Since $\MatSeq{1}{i-1}\models\body{r'\sigma}$,
    we then have $k\prec_Am\sigma$, and hence
    $\Dat'\not\models A(\vec a,m\sigma)$.
  \end{compactitem}
  Since $A$ must be EDB in $\Prog''$, we then conclude
  $\Mat{\Prog''}\models\naf\alpha\sigma$ analogously to before.

  Next, suppose $\beta\in\MatSeqq{2}{i}$ for some $\beta$. We show
  $\beta\in\Mat{\Prog'}$. Since
  $\beta\in\IFPStep{\Prog''[2]\cup\MatSeqq{1}{\infty}}{\MatSeqq{2}{i-1}}$,
  there is a rule $r''\in\Prog''[2]\cup\MatSeqq{1}{\infty}$ such that, for some
  grounding $\sigma$, $\MatSeqq{2}{i-1}\models\body{r''\sigma}$. Moreover, by
  the inductive hypothesis, $\MatSeqq{2}{i-1}\subseteq\Mat{\Prog'}$.

  We distinguish two cases. If $r''=\beta\in\MatSeqq{1}{\infty}$, it is easily
  seen that $\beta\in\Mat{\Prog'}$ since, by construction, we have
  $\Prog''[1]\subseteq\Prog'$, and hence
  $\MatSeqq{1}{\infty}=\Mat{\Prog''[1]}\subseteq\Mat{\Prog'}$ since $\Prog'$ is
  positive. Thus, w.l.o.g., suppose $r''\in\Prog''[2]$. It then suffices to
  show that there is a rule $r'\in\Prog'$ obtained from $r''$ such that
  $\Mat{\Prog'}\models\body{r'\sigma}$. 
  Since $\MatSeqq{2}{i-1}\models\body{r''\sigma}$, we have
  $\MatSeqq{1}{\infty}\not\models\alpha\sigma$ for each negative literal
  $\naf\alpha\in\body{r''}$, and hence also
  $\Dat'\not\models\alpha\sigma$. Consequently, rule $r''$ is not deleted by
  the transformation rules in Definition~\ref{def:reduct} but rather
  transformed to a positive rule $r'$ such that the only literals in
  $\body{r'}\setminus\body{r''}$ have the form $k\prec_A m$ such that
  $\naf A(\vec a,m)$ is a non-ground limit literal in $\body{r''}$ and
  $\Dat'\models\lub{A(\vec a,k)}$. Thus, since $r'$ is positive and
  $\MatSeqq{2}{i-1}\subseteq\Mat{\Prog'}$, it suffices to show
  $\Mat{\Prog'}\models k\prec_A m\sigma$ for each such literal
  $(k\prec_A m)\in\body{r'}$. This follows since, by construction and since
  $\Prog''$ is semi-positive, $A$ is EDB in $\Prog'$, and hence
  $\Mat{\Prog}\models A(\vec a,s)$ holds for a term $s$ if and only if
  $\Dat'\models A(\vec a,s)$; for each literal $(k\prec_A m)\in\body{r'}$, we
  then have $\Mat{\Prog}\not\models A(\vec a,m\sigma)$ and
  $\Mat{\Prog}\models A(\vec a,k)$, which implies
  $\Mat{\Prog}\models k\prec_A m\sigma$, as required.

  For the second claim, note that, by construction, $\ssize{\Prog'}$
  is bounded from above by $\ssize{\Prog''}$, for $\Prog''$ the
  semi-grounding of $\Prog\cup\Dat$, while $\ssize{\Prog''}$ is easily
  seen to be polynomial in $\ssize{\Dat}$ for $\Prog$ fixed. Moreover,
  $\Prog''$ can be computed in polynomial time, w.r.t.\
  $\ssize{\Dat}$, and each rule in $\Prog'$ can be computed from a
  rule in $\Prog''$ in polynomial time, provided that we can
  polynomially check $\Dat'\models\alpha$, for $\Dat'$ as above. This
  clearly holds since $\Dat'\models\alpha$ can be checked by simply
  matching $\alpha$ against facts in $\Dat'$. Finally
  ${\max_{r \in \Prog'} \ssizeu{r}\le\max_{r \in \Prog''}
    \ssizeu{r}}\le\max_{r \in \Prog\cup\Dat} \ssizeu{r}$.
\end{proof}

We next use Lemmas~\ref{lem:pseudo-mat-bounded} and~\ref{lem:reduct} to show
Lemma~\ref{lem:lim-lin-mat-fpnp}. To this end, we first establish the
following auxiliary result.

\begin{lemma} \label{lem:lim-lin-func-fpnp} Let\/ $\Prog$ be a semi-positive,
  limit-linear program and let\/ $f$ be the function mapping each triple
  $(\Dat,A,\vec a)$, for\/ $\Dat$ a limit dataset, $A$ a $\tmax$ (resp.\
  $\tmin$) predicate and $\vec a$ a tuple of objects, to the greatest (resp.\
  least) $k\in\mathbb Z\cup\set{\allZ}$ such that\/
  $\Prog\cup\Dat\models A(\vec a,k)$ if such $k$ exists, and otherwise to a
  special symbol~$\noneZ$. Then function $f$ is computable in \fpnp{}.
\end{lemma}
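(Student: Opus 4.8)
The plan is to compute $f(\Dat,A,\vec a)$ by making polynomially many fact‑entailment queries against a single positive, semi‑ground, limit‑linear program, to answer each query with one call to an $\np$ oracle, and to isolate the extremal limit value by a binary search over a polynomially bounded range. Concretely, given the input $(\Dat,A,\vec a)$, I would first compute the reduct $\Prog'$ of $\Prog\cup\Dat$. By Lemma~\ref{lem:reduct} this takes polynomial time in $\ssize\Dat$, $\Prog'$ is semi‑ground, positive and limit‑linear, $\ssize{\Prog'}$ is polynomial in $\ssize\Dat$, $\max_{r\in\Prog'}\ssizeu r$ is bounded by $\max_{r\in\Prog\cup\Dat}\ssizeu r$ — a constant, since $\Prog$ is fixed and every fact has constant unit‑cost size — and $\Prog\cup\Dat\models A(\vec a,k)$ iff $\Prog'\models A(\vec a,k)$ for every $k$. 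Hence it suffices to determine the greatest $k\in\ZZ\cup\set{\allZ}$ with $\Prog'\models A(\vec a,k)$ when $A$ is $\tmax$ (the least such $k$ when $A$ is $\tmin$), or to output $\noneZ$ if no such $k$ exists.

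The core observation is that a single query $\Prog'\models A(\vec a,m)$, for $m$ an integer given in binary, can be decided by a $\conp$ procedure running in time polynomial in $\ssize\Dat$ and the bit‑size of $m$, hence by one oracle call. By Theorem~\ref{th:pseudo} (with its double‑exponential term sharpened to the unit‑cost measure $\ssizeu{\cdot}$ via Lemma~\ref{lem:presburger-encoding}), $\Prog'\not\models A(\vec a,m)$ holds iff there is a pseudo‑model $J$ of $\Prog'$ with $J\not\models A(\vec a,m)$, $|J|\le|\Prog'|$, and every integer in $J$ of magnitude bounded polynomially in the largest magnitude of an integer in $\Prog'$, exponentially in $|\Prog'|$, and double‑exponentially in $\max_{r\in\Prog'}\ssizeu r$; since the last quantity is $O(1)$ while $|\Prog'|$ and the integers in $\Prog'$ are at most (singly) exponential in $\ssize\Dat$, such a $J$ has total size polynomial in $\ssize\Dat$. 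Verifying a guessed $J$ is polynomial: that $J$ is a pseudo‑interpretation falsifying $A(\vec a,m)$ is a syntactic check, and checking that the limit‑closed interpretation corresponding to $J$ satisfies a rule $r\in\Prog'$ reduces to deciding validity of a $\forall$‑quantified Presburger sentence whose matrix is a Boolean combination of linear (in)equalities — since $r$ is semi‑ground, all of its object and ordinary numeric atoms are ground, and $r$ has only $O(1)$ numeric variables and $O(1)$ atoms — which is polynomial in the bit‑size of the coefficients because the number of variables is bounded.

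Finally, let $L$ be the bound from Lemma~\ref{lem:pseudo-mat-bounded} on the magnitude of integers in $\PMat{\Prog'}$; by the same considerations $L$ has polynomially many bits in $\ssize\Dat$. Assume $A$ is $\tmax$; the $\tmin$ case is symmetric. The set $S=\qset{k\in\ZZ}{\Prog'\models A(\vec a,k)}$ is downward closed under $\le$ and, by the semantics of limit predicates, equals $\emptyset$, $(-\infty,\ell]$ with $|\ell|\le L$, or all of $\ZZ$. The algorithm queries $\Prog'\models A(\vec a,-L)$ and returns $\noneZ$ if this fails; otherwise it queries $\Prog'\models A(\vec a,L+1)$ and returns $\allZ$ if this succeeds; otherwise $S=(-\infty,\ell]$ with $\ell\in[-L,L]$, found by a binary search on $[-L,L]$ — maintaining the invariant that the lower endpoint lies in $S$ and the upper endpoint plus one does not — with $O(\log L)$, hence polynomially many, further queries, and $\ell$ is returned. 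The whole procedure runs in polynomial time with polynomially many $\np$‑oracle calls, so $f\in\fpnp{}$.

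The step I expect to be the main obstacle is establishing that each entailment query genuinely lies in $\conp$ even though $\Prog'$ is not a fixed program but grows polynomially with the data. The key point is that Lemma~\ref{lem:reduct} keeps $\max_{r\in\Prog'}\ssizeu r$ constant, so the double‑exponential dependence in Theorem~\ref{th:pseudo} collapses to a constant; this is what keeps the nondeterministic certificate polynomial‑sized and polynomial‑time checkable, and hence what makes the oracle calls legitimate.
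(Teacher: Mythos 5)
Your proposal is correct and follows essentially the same route as the paper's proof: compute the reduct via Lemma~\ref{lem:reduct}, use the magnitude bound of Lemma~\ref{lem:pseudo-mat-bounded} (with the unit-cost rule size kept constant so the double-exponential term collapses) to delimit a search interval whose endpoints detect the $\noneZ$ and $\allZ$ cases, and binary-search with a \conp{} entailment oracle. The only cosmetic difference is that you unpack the \conp{} membership of each oracle query via the counter-pseudo-model characterisation, where the paper simply cites the \conp{}-completeness result from prior work.
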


\begin{proof}
  Without loss of generality, suppose $A$ is a $\tmax$ predicate. Let $\Prog'$
  be the reduct of $\Prog\cup\Dat$, and let $\ell$ be the bound on the
  magnitude of integers in $\PMat{\Prog'}$ from
  Lemma~\ref{lem:pseudo-mat-bounded}. Then, since, by Lemma~\ref{lem:reduct},
  $\Prog\cup\Dat\models A(\vec a,k)$ implies $A(\vec a,k')\in\PMat{\Prog'}$ for
  some $k'\ge k$, Lemma~\ref{lem:pseudo-mat-bounded} implies that
  $\Prog\cup\Dat\models A(\vec a,\ell+1)$ if and only if
  $\Prog\cup\Dat\models A(\vec a,\allZ)$. Similarly
  $\Prog\cup\Dat\not\models A(\vec a,-\ell)$ if and only if $\Prog\cup\Dat$
  does not satisfy $A(\vec a,k)$ for any $k\in\mathbb Z$. Since $|\Prog'|$ is
  polynomial in $\ssize{\Dat}$ but
  $\max_{r\in\Prog'\setminus\Dat}\ssizeu{r}\le\max_{r\in\Prog}\ssizeu{r}$, by
  Lemma~\ref{lem:pseudo-mat-bounded}, $\ell$ is exponentially bounded in
  $\ssize{\Dat}$, and hence every number in the range of $f$ can be represented
  using polynomially many bits.

  Given a triple $(\Dat,A,\vec a)$, we can thus compute
  $f(\Dat,A,\vec a)$ by a deterministic oracle TM whose oracle set consists
  of all pairs $(\Dat',\alpha)$ such that $\Prog\cup\Dat'\models\alpha$ as follows:
  \begin{enumerate}
  \item Compute the reduct $\Prog'$ of $\Prog\cup\Dat$.
  \item Compute a bound $\ell$ on the magnitude of integers in
    $\PMat{\Prog'}$ satisfying the restrictions in
    Lemma~\ref{lem:pseudo-mat-bounded}.
  \item Perform a binary search for the greatest number $k\in[-\ell,\ell+1]$ such
    that $(\Dat,A(\vec a,k))$ is in the oracle set.
  \item If no such $k$ exists, return $\noneZ$, if $k=\ell+1$, return
    $\allZ$, and otherwise return $k$.
  \end{enumerate}
  Correctness of the algorithm is immediate by the above observations.

  The reduct can be computed in step~(1) in polynomial time and is of
  polynomial size in $\ssize{\Dat}$, whereas
  $\max_{r\in\Prog'}\ssizeu{r}$ is bounded by a constant for a fixed
  $\Prog$ by Lemma~\ref{lem:reduct}. The computation in step~(2)
  takes polynomial time as the binary representation of $\ell$ is
  polynomial in $\ssize{\Dat}$. The search in step~(3) takes
  polynomial time and makes polynomially many oracle calls since the
  interval $[-\ell,\ell+1]$ is exponential in $\ssize{\Dat}$ and does
  not depend on $A$ or $\vec a$, as observed above. Finally, step~(4)
  is clearly polynomial in the size of the input.

  The claim follows since, by the results
  in~\cite{DBLP:conf/ijcai/KaminskiGKMH17}, fact entailment for positive,
  limit-linear programs is \conp{}-complete, hence the membership problem
  for the oracle set is in \conp{}, and $\fpnp=\fph\conp$.
\end{proof}

We then generalise Lemma~\ref{lem:lim-lin-func-fpnp} to
Lemma~\ref{lem:lim-lin-mat-fpnp}.

\materialisationfpnpsemipos*

\begin{proof}
  The set $\PMat{\Prog\cup\Dat}$ can be computed by the following algorithm:
  \begin{enumerate}
  \item Compute the reduct $\Prog'$ of $\Prog\cup\Dat$.
  \item Compute the least (w.r.t.\ $\sqsubseteq$) pseudo-model $J$ of all facts
    in $\Prog'$.
  \item For each IDB predicate $A$ and objects $\vec a$ occurring in the
    head of a rule in $\Prog'$:
    \begin{enumerate}
    \item if $A$ is an object predicate and
      $\Prog\cup\Dat\models A(\vec a)$, add $A(\vec a)$ to $J$;
    \item if $A$ is a $\tmax$ (resp.\ $\tmin$) predicate, compute the greatest
      (resp.\ least) $k\in\ZZ\cup\set{\allZ}$ such that
      $\Prog\cup\Dat\models A(\vec a,k)$, and, if it exists, add
      $A(\vec a,k)$ to $J$.
    \end{enumerate}
  \end{enumerate}
  Correctness of the algorithm follows since $\Prog'$ entails the same facts as
  $\Prog\cup\Dat$ by Lemma~\ref{lem:reduct} and steps (2) and (3) construct the least pseudo-model of
  $\Prog'$. Thus, for the claim, it suffices to show that steps (1), (2), (3.a)
  and (3.b) are all feasible in \fpnp{}, while step (3) is repeated at most
  polynomially often in $\ssize{\Dat}$.
  
  Step~(1) can be performed in polynomial time in $\ssize{\Dat}$ by
  Lemma~\ref{lem:reduct}, while the construction of a pseudo-model of
  a dataset in step~(2) is polynomial in $\ssize{\Prog'}$, and hence
  in $\ssize{\Dat}$, since it involves only trivial
  reasoning. Moreover, step~(3) is repeated at most $|\Prog'|$ times,
  where $|\Prog'|$ is bounded polynomially in $\ssize{\Dat}$ for fixed
  $\Prog$. Finally, step~(3.a) can be performed in \conp{} 
  since fact entailment is \conp{}-complete in data complexity by the results
  in~\cite{DBLP:conf/ijcai/KaminskiGKMH17}, while step~(3.b) is feasible in
  \fpnp{} by Lemma~\ref{lem:lim-lin-func-fpnp}.
\end{proof}

Note that Lemma~\ref{lem:lim-lin-mat-fpnp} immediately implies
Corollary~\ref{cor:lim-lin-mat-fpnp}, so we dispense with a separate
proof for the corollary.

\algorithmcomplexity*

\begin{proof}
  Let $\Prog=\Prog_0\cup\Dat$. Without loss of generality, the number
  of non-empty strata in $\Prog$ is bounded by a constant $s$ for
  $\Prog_0$ fixed, and hence loop~\ref{alg1:loop-begin}--5 is executed
  at most $s$ times. Let $J_i$ be the pseudo-interpretation computed
  by $O$ in iteration $i$ of the loop. By assumption, $O$ in iteration
  $i$ of the loop runs in time bounded by $q(\ssize{\Dat\cup J_{i-1}})$,
  for some polynomial $q$, and hence
  $\ssize{\Dat\cup J_i}\le p(\ssize{\Dat\cup J_{i-1}})$ for some
  polynomial $p$. Consequently, we have
  $\ssize{\PMat\Prog}=\ssize{J_s}\le p(\ssize{\Dat})^s$, which is in
  turn polynomial in $\ssize{\Dat}$, and loop~\ref{alg1:loop-begin}--5
  terminates in time bounded by $s\cdot
  q(p(\ssize{\Dat})^s)$. Finally, step~\ref{alg1:return} can clearly
  be performed in time polynomial in $\ssize{\PMat\Prog}$.
\end{proof}

\limlinupperbound*

\begin{proof}
  The claim is immediate by Lemma~\ref{lem:lim-lin-mat-fpnp} and
  Proposition~\ref{prop:algorithm-complexity}.
\end{proof}

\limlincomplexity*

\begin{proof}
  The upper bound follows by Lemma~\ref{lem:upper-bound} while
  hardness is established by reduction from the \emph{minimal
    satisfying assignment odd} problem. An instance $\mathcal M$ of
  the minimal satisfying assignment odd problem is given by a
  (repetition-free) tuple of variables $\langle x_N,\dots,x_0\rangle$
  and a satisfiable Boolean formula $\varphi$ over $x_0,\dots,x_N$
  (using operators $\lor$ and $\neg$). The problem is to determine
  whether the assignment $\sigma$ for which the tuple
  $\langle\sigma(x_N),\dots,\sigma(x_0)\rangle$ is lexicographically
  minimal (assuming $\false<\true$) among all satisfying truth
  assignments of $\varphi$ satisfies $\sigma(x_0)=\true$. The closely
  related problem where $\langle\sigma(x_N),\dots,\sigma(x_0)\rangle$
  is lexicographically maximal and $\varphi$ is not restricted to be
  satisfiable has been shown $\deltatwop$-complete
  in~\cite[Theorem~3.4]{DBLP:journals/jcss/Krentel88}, and the two
  versions are easily seen to be \logspace{} many-one
  inter-reducible. We reduce the problem by presenting a fixed program
  $\Prog_{\mathit{modd}}$, admitting two strata, and a dataset
  $\Dat_{\mathcal M}$, which depends on $\mathcal M$, and showing that
  $\mathcal M$ is true if and only if
  $\Prog_{\mathit{modd}}\cup\Dat_{\mathcal M}$ entails a nullary fact
  $\mathit{minOdd}$.

  Our encoding uses object EDB predicates $\mathit{root}$, $\mathit{or}$, and
  $\mathit{not}$; ordinary numeric EDB predicate $\mathit{shift}$; and $\tmax$
  IDB predicates $\mathit{ass}$, $\mathit{T}$, and $\mathit{F}$. Program
  $\Prog_{\mathit{modd}}$ consists of
  the following rules, 
  where we write $(s_1\le s_2<s_3)$ as an abbreviation for the conjunction
  $(s_1\le s_2)\land(s_2<s_3)$.
  \begin{align}
    &\to\mathit{ass}(0)\label{eq:minodd:init}\\
    \mathit{root}(x)\land F(x,n)&\to\mathit{ass}(n+1)\label{eq:minodd:inc}\\
    \mathit{or}(x,y,z)\land F(y,n)\land F(z,n)&\to F(x,n)\label{eq:su:orFalse}\\
    \mathit{or}(x,y,z)\land T(y,n)&\to T(x,n)\label{eq:su:orTrue1}\\
    \mathit{or}(x,y,z)\land T(z,n)&\to T(x,n)\label{eq:su:orTrue2}\\
    \mathit{not}(x,y)\land T(y,n)&\to F(x,n)\label{eq:su:notFalse}\\
    \mathit{not}(x,y)\land F(y,n)&\to T(x,n)\label{eq:su:notTrue}\\
    \mathit{ass}(n)\land\mathit{shift}(x,s)\land(0\le m_1)\land(0\le m_2<s)\land(n\doteq 2\times m_1\times s+s+m_2)&\to T(x,n)\label{eq:minodd:varTrue}\\
    \mathit{ass}(n)\land\mathit{shift}(x,s)\land(0\le m_1)\land(0\le m_2<s)\land(n\doteq 2\times m_1\times s+m_2)&\to F(x,n)\label{eq:minodd:varFalse}\\
    \lub{\mathit{ass}(n)}\land(0\le m)\land(n\doteq 2\times m+1)&\to\mathit{minOdd} \label{eq:minodd:minOdd}    
  \end{align}
  Dataset $\Dat_{(\vec x,\varphi)}$ contains facts
  \eqref{eq:minodd:shift}--\eqref{eq:minodd:not}, where, for each distinct
  subformula $\psi$ of $\varphi$ (including $\varphi$ itself), $a_{\psi}$ is a
  fresh object. Note that numbers $2^i$ for $0\le i\le N$ are exponential in
  $N$, and thus can be computed in polynomial time and represented using
  polynomially many bits in the size of the input.
  \begin{align}
    &\to\mathit{shift}(a_{x_i},2^i)&&\textup{for each }0\le i\le N\label{eq:minodd:shift}\\
    &\to\mathit{root}(a_{\varphi})&&\label{eq:minodd:root}\\
    &\to\mathit{or}(a_{\psi},a_{\psi_1},a_{\psi_2})&&\textup{for each subformula $\psi=\psi_1\lor\psi_2$ of $\varphi$}\label{eq:minodd:or}\\
    &\to\mathit{not}(a_{\psi},a_{\psi_1})&&\textup{for each subformula $\psi=\neg\psi_1$ of $\varphi$}\label{eq:minodd:not}
  \end{align}
  In our reduction, each truth assignment $\sigma$ for $x_0,\dots,x_N$ is
  associated with a number $\sum_{0\le i\le N}\sigma(x_i)\times 2^i$. Thus,
  given a number $n$ that encodes a truth assignment, variable $x_i$
  ($0\le i\le N$) is assigned $\true$ if $n=2\times m_1\times 2^i+2^i+m_2$, and
  $\false$ if $n=2\times m_1\times 2^i+m_2$, for some nonnegative integers $m_1$
  and $m_2$ where $m_2<2^i$. Thus, if numeric variable $n$ is assigned such an
  encoding of a truth assignment and numeric variable $s$ is assigned the
  factor $2^i$ corresponding to variable $x_i$, then conjunction
  \begin{gather*}
    (0\le m_1)\land(0\le m_2<s)\land(n\doteq 2\times m_1\times s+s+m_2)
  \end{gather*}
  is true if and only if $x_i$ is true in the assignment (encoded by) $n$;
  analogously, conjunction
  \begin{gather*}
    (0\le m_1)\land(0\le m_2<s)\land(n\doteq 2\times m_1\times s+m_2)
  \end{gather*}
  is true if and only if $x_i$ is $\false$ in assignment
  $n$. Facts~\eqref{eq:minodd:shift} associate with every variable $x_i$ the
  corresponding factor $2^i$, and hence rules~\eqref{eq:minodd:varTrue}
  and~\eqref{eq:minodd:varFalse} derive $T(a_{x_i},n)$ if $x_i$ is true and
  $F(a_{x_i},n)$ if $x_i$ is $\false$ in assignment $n$.
  Facts~\eqref{eq:minodd:root}--\eqref{eq:minodd:not} encode the structure of
  $\varphi$. Using these facts,
  rules~\eqref{eq:su:orFalse}--\eqref{eq:su:notTrue} recursively
  evaluate $\varphi$, deriving, for each subformula $\psi$ of $\varphi$,
  $T(a_\psi,n)$ if $\psi$ evaluates to true and $F(a_\psi,n)$ if $\psi$
  evaluates to $\false$ in assignment $n$. Rules~\eqref{eq:minodd:init}
  and~\eqref{eq:minodd:inc} then search for the lexicographically minimal
  assignment that satisfies $\varphi$ (recall that $\varphi$ is satisfiable by
  assumption)---rule~\eqref{eq:minodd:init} ensures than assignment 0 is
  checked, and rule~\eqref{eq:minodd:inc} ensures that assignment $n+1$ is
  checked whenever $\varphi$ evaluates to $\false$ in $n$.
  Finally, rule~\eqref{eq:minodd:minOdd} derives $\mathit{minOdd}$ if and only
  if $x_0$ is $\true$ in the minimal assignment satisfying $\varphi$, as
  required.
\end{proof}

\section{Proofs for Section~\ref{sec:tractability}}\label{sec:proofs-tc}

\reductpreservestc*

\begin{proof}
  By definition, the program $\Prog'$ obtained by first semi-grounding
  $\Prog\cup\Dat$ and then simplifying all numeric terms as much as
  possible is type-consistent. Thus, it suffices to show that the
  possible violations of type consistency introduced by the additional
  transformation rules in Definition~\ref{def:reduct} can be repaired
  in polynomial time. Since the transformation rules apply to negative
  body literals of an individual rule, suppose $r$ is a semi-ground,
  semi-positive, type-consistent rule and $\mu=\naf\alpha$ is a negative
  body literal of $r$. We have two cases.

  If $\alpha$ is ground, then either $r$ is deleted or $\mu$ is
  deleted from $r$. Clearly, neither of these transformations can
  violate type consistency since $\mu$ does not mention a numeric
  variable.

  If $\alpha=A(\vec a,m)$ is a non-ground limit literal, then one of
  the following is true, for $\Dat'$ the set of facts in $\Prog\cup\Dat$:
  \begin{compactenum}[\it (i)]
  \item $\Dat'\models A(\vec a,k)$ for each $k\in\ZZ$ and $r$ is removed,
  \item $\Dat'\not\models A(\vec a,k)$ for each $k\in\ZZ$ and $\mu$ is
    removed from $r$, or
  \item $\Dat'\models\lub{A(\vec a,k)}$ for some $k\in\ZZ$ and $\mu$
    is replaced in $r$ with $(k\prec_A m)$.
  \end{compactenum}
  Case {\it (i)} does not violate type consistency.

  In case {\it(ii)}, since $r$ is type-consistent, variable $m$ needs
  to be guarded, i.e., there needs to be a conjunction
  $A(\vec a,n)\land(m\doteq n+t)$ for $t\in\set{1,-1}$ in
  $\body{r}$. Moreover, since $A$ is EDB in $\Prog\cup\Dat$, and hence
  in $\Prog'$, $\Dat'\not\models A(\vec a,k)$ implies
  $\Prog'\not\models A(\vec a,k)$ for each $k\in\ZZ$, and thus the
  literal $A(\vec a,n)\in\body{r}$ will never be satisfied when
  computing the materialisation of $\Prog'$, i.e., $r$ is semantically
  redundant and hence can be removed from $\Prog'$, maintaining
  type-consistency.

  Finally, in case {\it(iii)}, type consistency is violated because in
  the transformed rule, variable $m$ no longer occurs in a standard
  body atom. Let $r'$ be the rule obtained from $r$ by a one-step
  application of the transformation rules in case {\it(iii)}.  To
  restore type consistency, we will equivalently re-state rule $r'$
  eliminating all occurrences of $m$. To this end, note that, as in
  the previous case, since $r$ is type-consistent, variable $m$ needs
  to be guarded, i.e., there needs to be a conjunction
  $A(\vec A,n)\land\naf A(\vec a,m)\land(m\doteq n+t)$ for
  $t\in\set{1,-1}$ in $\body{r}$. Thus, let rule $r''$ be obtained
  from $r$ by removing the conjunction
  $A(\vec a,n)\land\naf A(\vec a,m)\land(m\doteq n+t)$ and
  substituting each occurrence of $m$ in $r$ with $k+t$ and each
  occurrence of $n$ with $k$. Clearly, rule $r''$ is type-consistent
  since so is $r$. Moreover, since $A$ is EDB in $\Prog'$, we have
  $\Prog'\models\lub{A(\vec a,k)}$, and hence $r$ can be replaced with
  $r''$ while maintaining the set of entailed facts.

  Clearly, the transformation rules in Definition~\ref{def:reduct}
  restricted to type-consistent programs can be modified to preserve
  type consistency as described above while remaining polynomially
  computable.
\end{proof}

\stablecomplexity*

\begin{proof}
  The \pt{} lower bound in data complexity is inherited from plain
  Datalog~\cite{DBLP:journals/csur/DantsinEGV01}. For the upper bound, note
  that, for $\Prog$ a stratified, type-consistent program, the program
  $\Prog[i]\cup\Dat$ is type-consistent for each $i\in\NN$ and each finite
  dataset $\Dat$. Thus, by Proposition~\ref{prop:algorithm-complexity}, it
  suffices to show that the pseudo-materialisation of a semi-positive,
  type-consistent program $\Prog'$ can be computed in polynomial time in data
  complexity. By Lemmas~\ref{lem:reduct} and~\ref{lem:reduct-preserves-tc},
  this reduces to showing the existence of a polynomial algorithm for computing
  the pseudo-materialisation of a semi-ground, positive, type-consistent
  program $\Prog''$. \citeA{DBLP:conf/ijcai/KaminskiGKMH17} provide such an
  algorithm that terminates in time polynomial in $\ssize{\Prog''}$, provided
  $\max_{r\in\Prog''}\ssizeu{r}$ is bounded by a constant. This assumption can
  be made since, w.l.o.g., $\max_{r\in\Prog'}\ssizeu{r}$ is constant w.r.t.\
  data complexity and, for $\Prog''$ the semi-ground, positive, type-consistent
  program obtained from $\Prog'$ by the results in Lemmas~\ref{lem:reduct}
  and~\ref{lem:reduct-preserves-tc}, we have
  $\max_{r\in\Prog''}\ssizeu{r}\le\max_{r\in\Prog'}\ssizeu{r}$.
\end{proof}

\typeconsistentprogramschecking*

\begin{proof}
  Let $\Prog$ be a stratified, limit-linear program. We can check whether
  $\Prog$ is type-consistent by considering each rule ${r \in \Prog}$
  independently.  For the first type consistency condition, note that each
  maximally simplified numeric term in a semi-ground limit-linear rule has the
  form $k_0+\sum_{i=1}^nk_i\times\prod_{j=1}^{\ell_i}m_i^j$ for all
  $\ell_i\ge 1$. Such a term satisfies the first condition iff, for each $i$,
  either $\ell_i=1$ or $k_i=0$. Thus, to check the first condition, it
  suffices, for each numeric term $s_0+\sum_{i=1}^n s_i$ in $r$ and each $i$
  such that $s_i$ contains at least two variables not occurring in a positive
  ordinary numeric literal, to check that either one of the constants in $s_i$
  is 0 or $s_i$ contains some variable $m$ occurring in a positive ordinary
  numeric literal and 0 is the only constant mentioned in $\Prog$ (and hence
  $m$ must be semi-grounded to 0); clearly, this is doable in logarithmic
  space.

  The second and third conditions are clearly checkable in logarithmic space.

  Thus, it suffices to check whether a semi-grounding of $r$ (with constants
  from $\Prog$) can violate the fourth or the fifth condition. In both cases,
  it suffices to consider at most one atom $\alpha$ at a time (a limit head
  atom ${A(\mathbf a,s)}$ for the fourth condition or a comparison atom
  ${s_1 < s_2}$ or ${s_1 \leq s_2}$ for the fifth condition). In $\alpha$, we
  consider at most one numeric term $s$ at a time (${s \in \set{s_1,s_2}}$ for
  the fifth condition), where, by our considerations for the first condition,
  we can assume w.l.o.g.\ that $s$ has the form
  ${t_0 + \sum_{i=1}^n t_i \times m_i}$ where $t_i$, for ${i \geq 1}$, are
  terms constructed from integers, variables occurring in positive ordinary
  numeric literals, and multiplication. Moreover, for each such $s$, we
  consider each unguarded variable $m$ occurring in $s$.
  By assumption, $m$ occurs in $s$, so we have ${m_i = m}$ for some $i$. For
  the fourth condition of Definition~\ref{def:type-consistent}, we need to
  check that, if the positive limit body literal ${B(\mathbf s,m_i)}$
  introducing $m_i$ (note that $m_i$ cannot be introduced by a negative literal
  by the third condition and since it is unguarded by assumption) has the same
  (different) type as the head atom, then term $t_i$ can only be grounded to
  positive (negative) integers or zero. For the fifth condition, we need to
  check that, if ${s = s_1}$ and the positive limit body literal
  ${B(\mathbf s,m_i)}$ introducing $m_i$ is $\tmin$ ($\tmax$), then term $t_i$
  can only be grounded to positive (negative) integers or 0, and dually for the
  case ${s = s_2}$. Hence, in either case, it suffices to check whether term
  $t_i$ can be semi-grounded so that it evaluates to a positive integer, a
  negative integer, or zero. We next discuss how this can be checked in
  logarithmic space. Let ${t_i = t_i^1 \times \dots \times t_i^k}$, where each
  $t_i^j$ is an integer or a variable not occurring in a limit atom, and assume
  without loss of generality that we want to check whether $t_i$ can be
  grounded to a positive integer; this is the case if and only if one of the
  following holds:
  \begin{itemize}
  \item all $t_i^j$ are integers whose product is positive;

  \item the product of all integers in $t_i$ is positive and $\Prog$ contains a
    positive integer;

  \item the product of all integers in $t_i$ is positive, $\Prog$ contains a
    negative integer, and the total number of variable occurrences in $t_i$ is
    even;

  \item the product of all integers in $t_i$ is negative, $\Prog$ contains a
    negative integer, and the total number of variable occurrences in $t_i$ is
    odd; or

  \item the product of all integers in $t_i$ is negative, $\Prog$ contains both
    positive and negative integers, and some variable $t_i^j$ has an odd number
    of occurrences in $t_i$.
  \end{itemize}
  Each of these conditions can be verified using a constant number of pointers
  into $\Prog$ and binary variables. This clearly requires logarithmic space,
  and it implies our claim.
\end{proof}


}{}

\end{document}
